\documentclass[sigconf]{acmart}

\usepackage{amsthm}
\usepackage{booktabs}
\usepackage{multirow}
\usepackage{algorithm}
\usepackage{algpseudocode}
\usepackage{placeins}

\newtheorem{theorem}{Theorem}
\newtheorem{proposition}[theorem]{Proposition}
\newtheorem{lemma}[theorem]{Lemma}

\AtBeginDocument{%
  }

\copyrightyear{2026}
\acmYear{2026}
\setcopyright{othergov}
\acmConference[KDD '26]{Proceedings of the 32nd ACM SIGKDD Conference on Knowledge Discovery and Data Mining V.2}{August 09--13, 2026}{Jeju Island, Republic of Korea}
\acmBooktitle{Proceedings of the 32nd ACM SIGKDD Conference on Knowledge Discovery and Data Mining V.2 (KDD '26), August 09--13, 2026, Jeju Island, Republic of Korea}
\acmDOI{10.1145/3770855.3818106}
\acmISBN{979-8-4007-2259-2/2026/08}

\begin{document}

\title{Scalable Uncertainty Quantification for Extreme Weather Forecasting via Empirical Neural Tangent Kernels}

\author{Jose Marie Antonio Mi\~{n}oza}
\affiliation{%
  \department{Center for AI Research}
  \institution{Department of Education}  
  \city{Makati}
  \country{Philippines}}
\email{ecair.jminoza@deped.gov.ph}

\author{Rex Gregor Laylo}
\affiliation{%
  \department{Center for AI Research}
  \institution{Department of Education}  
  \city{Makati}
  \country{Philippines}}
\email{ecair.rlaylo@deped.gov.ph}

\author{Sebastian C. Iba\~{n}ez}
\affiliation{%
  \department{Center for AI Research}
  \institution{Department of Education}  
  \city{Makati}
  \country{Philippines}}
\email{ecair.sibanez@deped.gov.ph}

\renewcommand{\shortauthors}{Mi\~{n}oza et al.}

\begin{abstract}
Deep learning weather models now match numerical weather prediction accuracy while running orders of magnitude faster, but produce deterministic forecasts without uncertainty estimates, a critical gap for high-stakes decisions during extreme weather events.
This paper proposes Neural Tangent Kernel-based uncertainty quantification (NTK-UQ) using last-layer empirical features.
Theoretical analysis predicts that UQ quality is architecture-dependent through two mechanisms.
First, a variance collapse mechanism explains when UQ fails: when the eigenvalue truncation rank approaches the effective rank of the feature space, the GP correction term consumes nearly all prior variance, destroying discrimination between tropical cyclones and routine conditions; architectures with concentrated spectra (spectral operators) require aggressive truncation ($k \leq 10$), while attention-based models tolerate full-rank computation.
Second, decomposition performance depends on the non-Gaussian, heavy-tailed structure of extreme weather: Independent Component Analysis exploits higher-order statistics (kurtosis, negentropy) to isolate heavy-tailed extreme-event features, achieving higher discrimination than singular value decomposition, which captures only second-order variance.
A data-driven selection rule chooses ICA or SVD from the feature eigenspectrum concentration ratio, correctly prescribing the superior decomposition for all four evaluated architectures.
Compared to split conformal prediction (the natural post-hoc baseline), NTK-UQ achieves 31--37\% sharper prediction intervals at 90\% coverage, and uniquely produces \emph{adaptive} intervals that scale with extreme event severity, which conformal prediction cannot achieve by construction.
The framework requires no retraining; inference-time uncertainty requires only a single matrix-vector product per sample.
\end{abstract}

\begin{CCSXML}
<ccs2012>
 <concept>
  <concept_id>10010147.10010257.10010293.10010294</concept_id>
  <concept_desc>Computing methodologies~Neural networks</concept_desc>
  <concept_significance>300</concept_significance>
 </concept>
 <concept>
  <concept_id>10010147.10010257.10010321</concept_id>
  <concept_desc>Computing methodologies~Uncertainty quantification</concept_desc>
  <concept_significance>500</concept_significance>
 </concept>
 <concept>
  <concept_id>10010147.10010257.10010293.10010280</concept_id>
  <concept_desc>Computing methodologies~Gaussian processes</concept_desc>
  <concept_significance>500</concept_significance>
 </concept>
 <concept>
  <concept_id>10010147.10010257.10010282</concept_id>
  <concept_desc>Computing methodologies~Spectral methods</concept_desc>
  <concept_significance>300</concept_significance>
 </concept>
 <concept>
  <concept_id>10010405.10010489</concept_id>
  <concept_desc>Applied computing~Earth and atmospheric sciences</concept_desc>
  <concept_significance>100</concept_significance>
 </concept>
</ccs2012>
\end{CCSXML}

\ccsdesc[500]{Computing methodologies~Uncertainty quantification}
\ccsdesc[500]{Computing methodologies~Gaussian processes}
\ccsdesc[300]{Computing methodologies~Neural networks}
\ccsdesc[300]{Computing methodologies~Spectral methods}
\ccsdesc[100]{Applied computing~Earth and atmospheric sciences}

\keywords{uncertainty quantification, neural tangent kernel, Gaussian processes, deep learning, calibration, weather forecasting}

\begin{teaserfigure}
  \includegraphics[width=\textwidth]{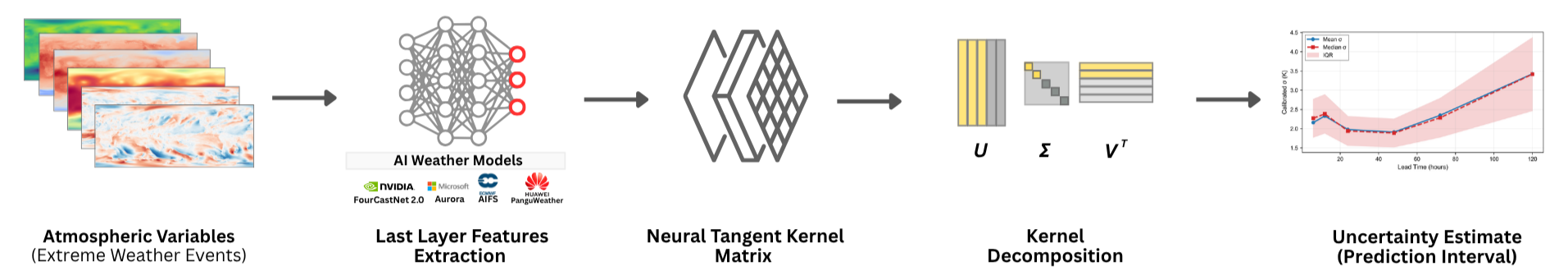}
  \caption{Overview of the NTK-UQ pipeline for extreme weather forecasting. Atmospheric variables from extreme weather events are processed through four foundation AI weather models (FourCastNetV2, Aurora, AIFS, Pangu-Weather) to extract last-layer features. These features construct the empirical Neural Tangent Kernel matrix, which is decomposed via SVD or ICA (shown: $U\Sigma V^\top$ decomposition) to obtain rank-$k$ approximation. At inference, the GP posterior variance formula yields calibrated prediction intervals that quantify epistemic uncertainty per variable.}
  \Description{Diagram showing the NTK-UQ pipeline: feature extraction from four AI weather models, empirical NTK kernel construction, SVD/ICA decomposition, and GP posterior uncertainty estimation at inference.}
  \label{fig:pipeline}
\end{teaserfigure}

\maketitle

\newcommand\kddavailabilityurl{https://doi.org/10.5281/zenodo.20499051}
\ifdefempty{\kddavailabilityurl}{}{
\begingroup\small\noindent\raggedright\textbf{Resource Availability:}\\
To foster reproducibility, the code, calibration matrices, and the EM-DAT initialization-date list used in this paper are publicly available at \url{\kddavailabilityurl}.
\endgroup
}

\section{Introduction}

Extreme weather events cause an estimated US\$143 billion per year in climate-attributable damages~\cite{newman2023global}, with the EM-DAT database recording 399 disasters in 2023 alone, affecting 93.1 million people~\cite{emdat2024}. Accurate forecasting of these events is essential, yet the value of a forecast depends not only on its accuracy but on knowing \emph{how much to trust it}, a question that requires calibrated uncertainty estimates.

Deep learning has transformed weather forecasting. Models such as FourCastNetV2~\cite{pathak2022fourcastnet}, Pangu-Weather~\cite{bi2023pangu}, GraphCast~\cite{lam2023graphcast}, and Aurora~\cite{bodnar2025aurora} now match or exceed the accuracy of traditional numerical weather prediction (NWP) systems while running orders of magnitude faster, generating 10-day global forecasts in seconds rather than hours. However, these models produce deterministic point forecasts without calibrated uncertainty estimates. Uncertainty quantification is essential for risk-sensitive applications: decision-makers require not only point predictions but probabilistic intervals that correlate with actual forecast errors. For extreme events where forecast errors carry the highest consequences, the absence of reliable uncertainty estimates limits model utility.

Existing approaches to uncertainty quantification (UQ) for neural networks face significant limitations when applied to large-scale weather models. Deep ensembles~\cite{lakshminarayanan2017simple} require training multiple copies of billion-parameter models from scratch, which is computationally prohibitive for foundation weather models. Monte Carlo dropout~\cite{gal2016dropout} can produce miscalibrated uncertainties~\cite{ovadia2019can} and requires architectural modifications incompatible with pre-trained checkpoints. Bayesian neural networks~\cite{blundell2015weight} add substantial memory and compute overhead, scaling poorly to operational-size models. Conformal prediction~\cite{angelopoulos2021gentle} provides distribution-free coverage guarantees but in its standard form produces uniform interval widths that do not correlate with actual forecast errors.

This paper proposes last-layer Neural Tangent Kernel (NTK) based uncertainty quantification for AI weather models. The key insight is that a weather model's last-layer features $\phi(x)$---learned from decades of ERA5 reanalysis---encode physically meaningful atmospheric structure. Under the last-layer NTK--GP correspondence, the feature kernel $K(x, x') = \phi(x)^\top \phi(x')$ acts as an \emph{ERA5-informed similarity measure}: a test input receives high uncertainty when its atmospheric state is unusual relative to both the model's learned feature manifold and the calibration distribution. This two-level epistemic signal is inaccessible to purely statistical baselines such as conformal prediction. Critically, UQ quality is architecture-dependent and decomposition-dependent: a data-driven selection rule determines whether Independent Component Analysis or Singular Value Decomposition is appropriate from the feature eigenspectrum, correctly prescribing the superior method without exhaustive comparison.

Throughout this paper, the term \emph{NTK uncertainty} refers to the posterior variance obtained by treating the frozen model's last-layer features as an \emph{empirical} Neural Tangent Kernel and applying Gaussian Process posterior theory. This usage differs from the full infinite-width NTK formulation and should be interpreted as a finite-width, post-hoc kernel approximation induced by the learned feature representations. The theoretical results in this paper (Propositions~\ref{prop:collapse}~and~\ref{prop:ica_theory}, Theorem~\ref{thm:hoeffding}) are proved directly under this empirical kernel without invoking the infinite-width limit; prior work~\cite{he2020bayesian,huang2023efficient} shows that finite-width networks behave approximately as kernel machines, and post-hoc calibration corrects for residual approximation error.

NTK-UQ has several properties that make it suitable for studying UQ across large-scale weather models. First, the method requires no model retraining or architectural changes; it works with any pre-trained checkpoint as a purely post-hoc procedure. Second, after one-time offline calibration, inference-time UQ requires only a matrix-vector product, adding minimal overhead to the forward pass. Third, uncertainties are computed per output variable, enabling variable-level uncertainty estimates.

Theoretical analysis predicts that UQ quality depends on both neural architecture (through eigenspectrum concentration) and decomposition method (through higher-order statistics exploitation). The framework is evaluated on four architecturally diverse AI weather models: FourCastNetV2 (SFNO), Pangu-Weather (Swin Transformer), Aurora (Perceiver), and AIFS (GNN-Transformer), using ERA5 reanalysis~\cite{hersbach2020era5} as ground truth. Evaluation focuses on extreme weather events from the EM-DAT International Disaster Database, including tropical cyclones, floods, droughts, and extreme temperature events. Experiments span forecast lead times from 6 to 120 hours. Results validate these predictions: uncertainty discrimination quality follows architecture-dependent patterns, with Independent Component Analysis achieving adaptive intervals that scale with extreme event severity, while singular value decomposition produces more uniform intervals that fail to distinguish tropical cyclone forecasts from routine conditions.

\paragraph{Contributions.} This paper makes five contributions: (1) \textbf{Variance Collapse Characterization}: formal analysis of how eigenvalue spectrum concentration causes UQ failure, with diagnostic criterion $R_k = C_k/P < 0.9$ for maintaining discrimination (Proposition~\ref{prop:collapse}), linking neural architecture (SFNO vs Transformer) to effective rank and optimal truncation strategy; (2) \textbf{Non-Gaussian Discrimination Theory}: explanation of why Independent Component Analysis outperforms singular value decomposition for extreme weather through higher-order statistics exploitation (Proposition~\ref{prop:ica_theory}), providing theoretical justification for decomposition method selection based on feature distribution properties; (3) \textbf{Architecture-UQ Interaction Framework}: systematic characterization of how neural architecture determines NTK eigenspectrum properties, which govern UQ quality, enabling predictive diagnosis without exhaustive experimentation; (4) \textbf{Decomposition Selection Rule}: Algorithm~\ref{alg:selection} provides a data-driven recipe that selects ICA or SVD from the feature eigenspectrum concentration ratio, correctly prescribing the superior method for all four evaluated architectures, validated against split conformal prediction with 31--37\% sharper intervals in 81\% of valid comparisons; and (5) \textbf{Empirical Validation}: evaluation across four foundation weather models (FourCastNetV2, Pangu-Weather, Aurora, AIFS) on 100 extreme weather events from EM-DAT confirms theoretical predictions and demonstrates that NTK-UQ produces adaptive intervals (CV~$> 0$) that conformal prediction cannot achieve by construction.

\section{Related Work}

AI weather foundation models~\cite{pathak2022fourcastnet,bi2023pangu,lam2023graphcast,bodnar2025aurora} now match numerical weather prediction accuracy. A subset produce probabilistic forecasts natively, but each at a cost: ECMWF's operational ensemble (ENS) requires 51-member perturbation runs at deployment; GenCast~\cite{price2025gencast} trains a diffusion model from scratch; and SEEDS~\cite{li2024seeds} requires a pre-existing ensemble to emulate. All are tied to specific architectures. By contrast, the large majority of AI weather checkpoints---including FourCastNetV2, Pangu-Weather, Aurora, and AIFS---are deterministic and lack native uncertainty estimates. NTK-UQ targets this majority: it applies post-hoc to any pre-trained deterministic checkpoint without retraining, enabling \emph{checkpoint reusability} across the rapidly growing ecosystem of foundation weather models.

Existing post-hoc UQ methods face significant barriers for billion-parameter models: deep ensembles~\cite{lakshminarayanan2017simple} require training multiple copies (prohibitively expensive), Bayesian methods~\cite{blundell2015weight,gal2016dropout} need architectural modifications and yield poorly calibrated uncertainties~\cite{ovadia2019can}, and conformal prediction~\cite{angelopoulos2021gentle} provides coverage guarantees but lacks per-sample discrimination.

The Neural Tangent Kernel~\cite{jacot2018neural} shows that infinitely wide networks behave as Gaussian Processes, enabling closed-form uncertainty quantification. For tractability, the last-layer empirical NTK uses the feature kernel $K(x, x') = \phi(x)^\top \phi(x')$ rather than the full gradient-based NTK. This coincides with last-layer Laplace approximation~\cite{mackay1992practical,daxberger2021laplace} for linear output heads. Recent work~\cite{he2020bayesian,huang2023efficient} demonstrates that NTK-based GP posteriors capture epistemic uncertainty even in finite-width networks. Unlike $\Delta$-UQ~\cite{thiagarajan2022single}, which requires retraining with anchor perturbation, NTK-UQ operates entirely post-hoc on pre-trained models. Detailed comparisons are provided in Appendix~\ref{sec:related_extended}.

\section{Method}

NTK-UQ is a framework for post-hoc uncertainty quantification in pre-trained neural weather models. The method consists of three phases: (1) last-layer feature extraction, (2) offline GP posterior construction via kernel decomposition, and (3) post-hoc scaling to achieve target coverage.

\subsection{Problem Setup}

Let $f_\theta: \mathcal{X} \to \mathcal{Y}$ be a pre-trained weather model that maps atmospheric states $x \in \mathcal{X} \subset \mathbb{R}^{C \times H \times W}$ to predictions $y \in \mathcal{Y} \subset \mathbb{R}^{C' \times H \times W}$, where $C$ and $C'$ are input and output channels, and $H \times W$ is the spatial grid. Given a calibration dataset $\mathcal{D}_{\text{cal}} = \{(x_i, y_i^*)\}_{i=1}^{N}$ with ground truth $y_i^*$ (used to construct the GP posterior and determine post-hoc scaling), the goal is to estimate predictive uncertainty $\sigma^2(x)$ such that prediction intervals achieve a target coverage level (e.g., 90\% of ground truth values fall within the 90\% prediction interval).

\subsection{Gaussian Process Interpretation}

Under the last-layer NTK–GP correspondence, a neural network with last-layer features $\phi(x)$ can be viewed as a Gaussian Process:
\begin{equation}\label{eq:ntk_gp}
    f(x) \sim \mathcal{GP}(0, K(x, x')),
\end{equation}
where $K(x, x') = \phi(x)^\top \phi(x')$ is the last-layer empirical NTK (the feature kernel). Given calibration data, the GP predictive variance at a new point $x_*$ is:
\begin{equation}\label{eq:gp_variance}
    \sigma^2(x_*) = K(x_*, x_*) + \sigma_n^2 - \mathbf{k}_*^\top (K + \sigma_n^2 I)^{-1} \mathbf{k}_*,
\end{equation}
where $K(x_*, x_*) = \|\phi(x_*)\|^2$ is the prior variance, $\sigma_n^2$ is the observation noise variance, $\mathbf{k}_* = [K(x_*, x_1), \ldots, K(x_*, x_N)]^\top$ is the kernel vector to calibration points, and $K_{ij} = K(x_i, x_j)$ is the kernel matrix. The term $\sigma_n^2$ in the predictive variance accounts for irreducible noise in the observations and is estimated from the eigenvalue spectrum (Section~\ref{sec:svd}).

\paragraph{Interpretation.} The GP posterior variance has a natural interpretation: the kernel $K$ encodes a prior over the model's function space shaped by the calibration data geometry~\cite{lee2018deep}. For a test input $x_*$, the posterior variance $\sigma^2(x_*)$ quantifies similarity to the calibration distribution in feature space. When $x_*$ is dissimilar to calibration inputs, the correction term $\mathbf{k}_*^\top (K + \sigma_n^2 I)^{-1} \mathbf{k}_*$ is small, and the posterior variance remains close to the prior, yielding high epistemic uncertainty~\cite{kendall2017uncertainties}. Conversely, inputs similar to the calibration set receive large corrections, yielding low uncertainty.

Crucially, the feature map $\phi$ is not hand-crafted but learned from ERA5 reanalysis data spanning decades of global atmospheric observations. The kernel $K(x, x') = \phi(x)^\top \phi(x')$ is therefore an \emph{ERA5-informed similarity measure}: it encodes physically meaningful atmospheric structure---the geometry of realizable weather states as the model learned it from training. A test input receives high uncertainty when it is unusual relative to both (1) the ERA5-learned feature manifold (encoded in the frozen weights $\phi$) and (2) the calibration distribution (encoded in the GP posterior from $n$ samples). This two-level epistemic signal is inaccessible to purely statistical baselines such as conformal prediction, which operate in prediction-error space without access to the model's learned atmospheric representation.

\subsection{Last-Layer Feature Extraction}

Modern neural weather models decompose as $f_\theta = g_\psi \circ \phi_\omega$, where $\phi_\omega: \mathcal{X} \to \mathbb{R}^d$ extracts features and $g_\psi: \mathbb{R}^d \to \mathcal{Y}$ is the final prediction head. Last-layer features are extracted by registering forward hooks during inference. For spatial feature maps, multi-statistic aggregation computes six statistics per channel (mean, standard deviation, minimum, maximum, 25th and 75th percentiles), yielding a fixed-dimensional feature vector regardless of spatial resolution. Architecture-specific extraction details are provided in Appendix~\ref{sec:feature_extraction}.

\subsection{Kernel Decomposition}\label{sec:decomposition}

Direct inversion of the kernel matrix $K$ is $O(N^3)$, prohibitive for large calibration sets. Before decomposition, features are centered by subtracting the calibration mean: $\bar{\phi} = \frac{1}{N}\sum_i \phi(x_i)$ and $\tilde{\phi}(x) = \phi(x) - \bar{\phi}$. This removes the dominant mean direction from the spectrum, ensuring the decomposition captures directions of \emph{variation} rather than the shared mean signal. The choice of decomposition method significantly affects UQ quality; this work compares Singular Value Decomposition (SVD) and Independent Component Analysis (ICA) to characterize these effects.

\paragraph{SVD Decomposition.} The standard approach uses singular value decomposition on the centered feature matrix $\tilde{\Phi} \in \mathbb{R}^{N \times d}$:
\begin{equation}\label{eq:svd}
    \tilde{\Phi} = U S V^\top.
\end{equation}
This decomposition yields the centered kernel eigenstructure directly, since $\tilde{K} = \tilde{\Phi} \tilde{\Phi}^\top = U S^2 U^\top$, meaning the eigenvalues are $\lambda_j = s_j^2$ (squared singular values) and the eigenvectors are the columns of $U$. SVD finds orthogonal directions of \textit{maximum variance} in the feature space. Retaining only the top-$k$ components (where $k \ll N$) captures the dominant directions of variation.

\paragraph{ICA Decomposition.} An alternative approach uses Independent Component Analysis (ICA)~\cite{hyvarinen2000independent} to decompose features into \textit{statistically independent} components rather than orthogonal directions of maximum variance. ICA assumes that the observed features $\tilde{\phi}(x)$ are linear mixtures of independent source signals: $\tilde{\phi}(x) = A s(x)$ where $s(x)$ are the independent components and $A$ is the mixing matrix. The FastICA algorithm~\cite{hyvarinen1999fast} recovers the unmixing matrix $W = A^{-1}$ by maximizing non-Gaussianity of the sources, yielding components $s(x) = W\tilde{\phi}(x)$. Unlike SVD, which prioritizes variance, ICA exploits higher-order statistics (kurtosis, skewness) to separate sources.

For extreme weather events, ICA offers a critical advantage: while SVD's maximum-variance criterion biases the decomposition toward typical weather patterns (high-frequency, high-variance modes), ICA's independence criterion can isolate rare extreme event signatures that occur as independent factors in the joint distribution, even when they contribute low marginal variance. Empirical results show that ICA outperforms SVD for uncertainty quantification in extreme events for three of four architectures (AIFS, Aurora, FourCastNetV2); SVD achieves coverage for Pangu-Weather while ICA fails (Section~\ref{sec:results}).

The predictive variance formula becomes:
\begin{equation}\label{eq:predictive_variance}
    \sigma^2_{\text{raw}}(x_*) = \underbrace{\|\tilde{\phi}(x_*)\|^2 + \sigma_n^2}_{\text{prior + noise}} - \underbrace{\sum_{j=1}^{k} \frac{\lambda_j \cdot (\tilde{\phi}(x_*)^\top v_j)^2}{\lambda_j + \sigma_n^2}}_{\text{GP correction}},
\end{equation}
where $\tilde{\phi}(x_*) = \phi(x_*) - \bar{\phi}$ is the centered test feature, $v_j$ are the right singular vectors of $\tilde{\Phi}$, $\lambda_j = s_j^2$ are the eigenvalues, and $\sigma_n^2$ is the noise variance. Projections onto high-variance directions receive large corrections (low uncertainty); dissimilar inputs receive small corrections (high uncertainty).

\paragraph{Noise Variance Estimation.\!\!} The noise parameter $\sigma_n^2$ is estimated as the mean of the residual eigenvalues $\{\lambda_{k+1}, \ldots, \lambda_d\}$. When the top-$k$ components exhaust all variance this estimate approaches zero, causing the correction to fully cancel the prior and destroying discrimination. The method falls back to the mean of the retained eigenvalues as a regularization nugget~\cite{cressie1993statistics}, preserving posterior variation even when the feature space is low-rank.

\begin{proposition}[Variance Collapse]\label{prop:collapse}
Let $\tilde{\Phi} = U S V^\top$ be the SVD of the centered calibration features and define the correction-to-prior ratio $R_k = C_k/P$ where $C_k = \sum_{j=1}^k \lambda_j c_j^2/(\lambda_j + \sigma_n^2)$ and $P = \|\tilde{\phi}(x_*)\|^2$. When the noise regularizer $\sigma_n^2 > 0$, each shrinkage weight $w_j = \lambda_j/(\lambda_j + \sigma_n^2) < 1$, so $R_k < 1$ and $\sigma^2(x_*) > 0$ for all ranks $k$ -- no collapse occurs. When $\sigma_n^2 = 0$, as $k$ approaches the true rank $r$, $R_k \to 1$ and $\sigma^2(x_*) \to 0$, destroying uncertainty discrimination. The actionable diagnostic is to maintain $R_k < 0.9$ before deployment; this threshold is empirically validated in Table~\ref{tab:collapse} (collapse at $k=100$ where $R_k = 0.92$) rather than derived from the proof. Proof in Appendix~\ref{sec:theory}.
\end{proposition}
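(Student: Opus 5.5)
The argument is linear algebra in the orthonormal basis of right singular vectors. I will write $c_j = \tilde{\phi}(x_*)^\top v_j$ for the projection of the centered test feature onto $v_j$, where $\{v_j\}_{j=1}^d$ completes the right singular vectors of $\tilde{\Phi}$ (Eq.~\ref{eq:svd}) to an orthonormal basis of $\mathbb{R}^d$. Parseval then gives $P = \|\tilde{\phi}(x_*)\|^2 = \sum_{j=1}^d c_j^2$. With the shrinkage weights $w_j = \lambda_j/(\lambda_j+\sigma_n^2)\in[0,1]$, the predictive variance (Eq.~\ref{eq:predictive_variance}) can be rewritten as
\begin{equation*}
\sigma^2_{\text{raw}}(x_*) = \sigma_n^2 + \sum_{j=1}^{k}(1-w_j)c_j^2 + \sum_{j>k} c_j^2, \qquad R_k = \frac{\sum_{j\le k} w_j c_j^2}{\sum_{j} c_j^2}.
\end{equation*}
Both regimes of the proposition follow from this identity.

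\emph{Case $\sigma_n^2>0$.} Each $\lambda_j \ge 0$, so $w_j<1$ strictly. Assuming $P>0$, we get $C_k = \sum_{j\le k} w_j c_j^2 < \sum_{j\le k} c_j^2 \le P$. This strict inequality needs some $c_j\neq 0$ with $j\le k$; otherwise $C_k=0$ and the bound holds trivially. Hence $R_k<1$ for every $k$. For positivity, the identity already gives $\sigma^2_{\text{raw}}(x_*) \ge \sigma_n^2 > 0$ regardless of $k$. I will also point out that the fallback nugget in the noise-variance paragraph is exactly what guarantees $\sigma_n^2>0$ when the residual eigenvalues vanish.

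\emph{Case $\sigma_n^2=0$.} Now $w_j=1$ for every $j$ with $\lambda_j>0$, so $C_k=\sum_{j\le k} c_j^2$ and $\sigma^2_{\text{raw}}(x_*) = \sum_{j>k} c_j^2 = P - C_k$. Let $r=\operatorname{rank}(\tilde{\Phi})$ be the true rank. The main obstacle is to justify that $\sum_{j>r} c_j^2 = 0$, i.e.\ that the test feature lies in the span of $v_1,\dots,v_r$; otherwise $R_r<1$.

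I will handle this with an explicit modeling assumption, which I expect is implicit in the statement: the test features lie on the same linear subspace as the calibration features, $\tilde{\phi}(x_*)\in\operatorname{row}(\tilde{\Phi})$. This is natural when $N\ge$ the effective feature dimension and test and calibration inputs share the frozen feature map. Under this assumption, $C_k\uparrow P$ monotonically as $k\to r$, with $C_r = P$. So $R_k\to 1$ and $\sigma^2\to 0$ for every test point, and all inputs receive (nearly) identical zero variance, which destroys discrimination.

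To make ``approaches'' quantitative, I will bound $1-R_k = \sum_{r\ge j>k} c_j^2/P$. Writing $\tilde{\phi}(x_*)$ in the basis $\{v_j\}$, this tail is small whenever the spectrum is concentrated, because the tail mass $\sum_{j>k}\lambda_j$ is small. This links the result to the architecture discussion.

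Finally, the proof will state that the $0.9$ threshold is not derived. It only serves as an operational margin, checked empirically in Table~\ref{tab:collapse}.
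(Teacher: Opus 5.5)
Your argument is essentially the paper's. Both expand $\tilde{\phi}(x_*)$ in the right singular vectors, use $w_j<1$ when $\sigma_n^2>0$, and in the noiseless case reduce $\sigma^2(x_*)$ to the orthogonal residual $\sum_{j>r}c_j^2$ (the paper's $\|\tilde{\phi}_\perp\|^2$).

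The difference is in how that residual is made to vanish. The paper asserts $\tilde{\phi}_\perp=0$ ``when $n\ge d$''. You instead assume directly that $\tilde{\phi}(x_*)$ lies in the row space of $\tilde{\Phi}$. Your assumption is the exact condition the argument needs. The paper's condition is meant to force the row space to be all of $\mathbb{R}^d$, which actually requires $\mathrm{rank}(\tilde{\Phi})=d$. After centering that means $n\ge d+1$ plus nondegenerate features. Your formulation also shows that the paper's own FourCastNetV2 and AIFS settings fall outside this regime. There $n$ is about $100$, far below $d=1536$ or $1024$, so a generic test feature keeps a nonzero residual and exact collapse is not guaranteed.

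You should drop or restate the ``quantitative'' step. It claims that $\sum_{k<j\le r}c_j^2/P$ is small because the eigenvalue tail $\sum_{j>k}\lambda_j$ is small, and this does not hold pointwise. The $\lambda_j=\sum_i(\tilde{\phi}(x_i)^\top v_j)^2$ measure how much the \emph{calibration} features project onto $v_j$, not how much the test point does. An unusual input can put most of its mass on low-variance directions, which is exactly how the GP flags it as uncertain. The claim holds only on average, for test points distributed like the calibration set. The proposition does not need it: monotonicity of $C_k$ together with $C_r=P$ already gives $R_k\to 1$.
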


\begin{proposition}[Non-Gaussian Discrimination]\label{prop:ica_theory}
When feature distributions exhibit joint non-Gaussianity (higher-order cumulants $\kappa_{i_1, \ldots, i_m} \neq 0$ for $m \geq 3$), SVD captures only second-order structure (the covariance matrix), discarding tail behavior and higher-order dependencies, whereas ICA exploits kurtosis and negentropy to isolate statistically independent sources. For extreme weather events with heavy-tailed marginals, ICA components aligned with extreme directions achieve higher kurtosis, producing adaptive uncertainty estimates that SVD cannot recover. Full formalization and proof in Appendix~\ref{app:ica_theory}; main-text propositions are accessible summaries with complete proofs in the appendix.
\end{proposition}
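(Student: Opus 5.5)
The plan is to formalize Proposition~\ref{prop:ica_theory} as two comparisons inside a linear mixing model, and then push each comparison through the variance formula~\eqref{eq:predictive_variance}. The model is $\tilde{\phi}(x) = A s(x)$ with independent, zero-mean, unit-variance sources $s_1,\dots,s_d$. One source, say $s_e$, carries the extreme-event signal: it has excess kurtosis $\kappa_4(s_e) > 0$, while the remaining sources have near-zero kurtosis.

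\textbf{Step 1: SVD sees only second-order structure.} Since $\tilde{K} = \tilde{\Phi}\tilde{\Phi}^\top$ and $\tilde{\Phi}^\top\tilde{\Phi} = N\,\widehat{\Sigma}$, the pair $(U,S,V)$ is a function of the empirical covariance alone. I would make this concrete as follows. Take any two source distributions $p, p'$ with identical covariance $AA^\top$ but different cumulants of order $m\ge 3$, for instance a Gaussian $s_e$ versus a Laplace-distributed $s_e$ with matched variance. Both give the same population $v_j$ and $\lambda_j$. As a result, the map $x_* \mapsto C_k$ in~\eqref{eq:predictive_variance} is the same fixed quadratic form $\tilde{\phi}^\top M \tilde{\phi}$ in both cases, and it carries no information about the tails. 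There is a further rotation indeterminacy: for any orthogonal $Q$, $AQ$ yields the same covariance. So no SVD direction is guaranteed to align with $a_e = A e_e$ unless the eigenvalues happen to separate it.

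\textbf{Step 2: ICA recovers the extreme direction.} Here I would invoke the standard identifiability results for ICA, namely Comon's theorem and the FastICA fixed-point analysis of Hyv\"arinen. The key facts are these. After whitening by $\widehat{\Sigma}^{-1/2}$, maximizing $|\kappa_4(w^\top z)|$ over unit $w$ has its local maxima at $w = \pm e_i$. By additivity of cumulants for independent variables, for unit $w$ we have
\[
\kappa_4(w^\top s) = \sum_i w_i^4 \kappa_4(s_i),
\]
and hence $|\kappa_4(w^\top s)| \le \max_i |\kappa_4(s_i)|$, with equality only at $w=\pm e_e$. So the component of maximal kurtosis is exactly $s_e$. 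Any SVD direction $v_j$ that mixes sources has strictly smaller kurtosis whenever its weight on $e_e$ is less than $1$. This is precisely the claim that "ICA components aligned with extreme directions achieve higher kurtosis."

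\textbf{Step 3: translating to adaptive uncertainty.} I would use the ICA analogue of~\eqref{eq:predictive_variance}, with the projections $c_j$ replaced by the source coordinates $s_j(x_*)$. Heavy tails of $s_e$ then imply that the distribution of $\sigma^2(x_*)$ across samples has larger dispersion, that is, a larger coefficient of variation. To show this, I would write the part of the posterior variance attributable to the extreme source as $g(s_e^2)$ for an affine $g$. Its variance equals a constant times $\operatorname{Var}(s_e^2) = \kappa_4(s_e) + 2$, so it increases with kurtosis. Under SVD the same term is spread across mixed directions whose kurtosis has been diluted toward the Gaussian value of $2$.

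The main obstacle is Step 3. The statement "produces adaptive estimates SVD cannot recover" is only true up to conditions. When the covariance eigenvalues are well separated and $a_e$ happens to be an eigenvector, SVD recovers the direction too. I would therefore state the conclusion under an explicit non-alignment hypothesis, namely that $a_e$ is not an eigenvector of $AA^\top$, or that the eigenvalues are degenerate. I would also have to handle the fact that ICA's unmixing is not orthogonal, so the shrinkage weights must be redefined in source coordinates. My first attempt at this would be to work entirely in whitened coordinates, where $W$ is orthogonal and both decompositions live in the same space.
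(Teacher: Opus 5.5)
Your Steps 1 and 2 are sound, and they sharpen what the paper actually does. The paper's appendix argument is qualitative: SVD sees only $\Sigma$, FastICA maximizes negentropy, and ICA is identifiable when at most one source is Gaussian (cited, not derived). Its ``adaptive uncertainty'' clause rests only on empirical statistics (normality and Mardia tests, kurtosis values, the CV table). Your cumulant-additivity bound $|\kappa_4(w^\top s)|\le\max_i|\kappa_4(s_i)|$ genuinely proves the ``higher kurtosis'' clause, which the paper only asserts. The gap is in Step 3, and it goes beyond the non-alignment caveat you flag: the mechanism you sketch points the wrong way.

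The dispersion of $\sigma^2(x_*)$ is governed by the aggregated quadratic form, not by the marginal kurtosis of individual projections. Write $\sigma^2(x_*)-\sigma_n^2=s^\top M s$ in source coordinates. For independent, zero-mean, unit-variance sources,
\[
\mathrm{Var}\bigl(s^\top M s\bigr)=2\,\mathrm{tr}(M^2)+\sum_i M_{ii}^2\,\kappa_4(s_i),
\]
so the heavy tail enters only through the diagonal weight $M_{ee}$. Diluting the kurtosis of each $c_j$ by mixing says nothing about $M_{ee}$; for example, $\sum_j c_j^2=\|\tilde{\phi}\|^2$ in every orthonormal basis.

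Now carry out your own whitened-coordinates plan. All eigenvalues coincide, so every shrinkage weight equals the same $w$, and $M=I-wP_k$, where $P_k$ is the orthogonal projection onto the retained rank-$k$ subspace, written in source coordinates.
\begin{itemize}
\item At $k=d$, ICA and SVD give literally the same $\sigma^2$.
\item For $k<d$, $\mathrm{tr}(M)$ and $\mathrm{tr}(M^2)$ are identical for every rank-$k$ subspace.
\item Meanwhile $M_{ee}=1-w\|P_k e_e\|^2\ge 1-w>0$ is \emph{minimized} exactly when $e_e$ lies in the retained subspace.
\end{itemize}
So if ICA isolates $s_e$ and keeps it among its leading components, it minimizes $M_{ee}^2\kappa_4(s_e)$. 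That minimizes the variance of $\sigma^2$ at fixed mean (when only $s_e$ has non-negligible kurtosis). A truncation that misses $a_e$, such as a top-variance SVD truncation when $a_e$ is a low-variance direction, leaves $s_e^2$ in the residual at full weight. That truncation is the more heavy-tail-sensitive one. This is just the GP logic: directions covered by the retained components are shrunk away by the correction term.

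Outside whitened coordinates, the substitution $c_j\to s_j(x_*)$ in \eqref{eq:predictive_variance} is not the posterior variance of any GP. The ICA basis is not orthonormal, so the Pythagorean step in the proof of Proposition~\ref{prop:collapse} fails and nonnegativity is no longer guaranteed. To close Step 3 you would need two things. First, a well-defined ICA variance formula. Second, an explicit comparison of $M_{ee}$ under the two truncations, with stated hypotheses on $a_e$, the spectrum and $k$. Kurtosis dilution alone cannot deliver this, and the paper's own argument does not establish the ``SVD cannot recover'' clause either.
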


\paragraph{Decomposition Selection.} Which method to use depends on eigenspectrum concentration. Let $\lambda_1 \geq \cdots \geq \lambda_d$ be the eigenvalues of the centered feature covariance. Algorithm~\ref{alg:selection} gives a data-driven selection rule validated empirically in Section~\ref{sec:ica_vs_svd}.

\begin{algorithm}[h]
\caption{ICA/SVD Decomposition Selection}\label{alg:selection}
\begin{algorithmic}[1]
\State \textbf{Input:} Centered feature matrix $\tilde{\Phi} \in \mathbb{R}^{n \times d}$, calibration set
\State Compute eigenvalues $\lambda_1 \geq \cdots \geq \lambda_d$ of $\tilde{\Phi}^\top\tilde{\Phi}$
\State Compute concentration ratio $\rho = \lambda_1 / \sum_{j=1}^d \lambda_j$
\If{$\rho > 0.8$} \Comment{Concentrated spectrum (e.g., SFNO)}
    \State Use \textbf{SVD} with $k \leq 10$; select $k$ by $R_k = C_k/P < 0.9$ (Prop.~1)
\ElsIf{$\rho < 0.5$} \Comment{Distributed spectrum (e.g., GNN-Transformer, Perceiver)}
    \State Use \textbf{ICA}; select $k$ by CRPS on a held-out validation split, subject to coverage $\geq 85\%$
\Else \Comment{Intermediate: compare both by CRPS}
    \State Run both on held-out validation split; use method with lower CRPS and coverage $\geq 85\%$
\EndIf
\State \textbf{Output:} Decomposition method and rank $k$
\end{algorithmic}
\end{algorithm}

\subsection{Post-hoc Calibration Scaling}\label{sec:posthoc}

Raw NTK uncertainties capture \textit{relative} uncertainty ordering across samples but not the \textit{absolute} magnitude: empirical coverage is typically well below the target level (e.g., 50\% instead of 90\%). A scaling factor $\alpha$ is learned per lead time via binary search such that $\sigma_{\text{cal}} = \alpha \cdot \sigma_{\text{raw}}$ achieves target 90\% coverage. This is equivalent to temperature scaling~\cite{guo2017calibration} applied to the GP variance, with $\alpha$ playing the role of the temperature parameter. Per-variable calibration learns separate scales $\alpha_v$ for each meteorological variable, accommodating their different error characteristics. The binary search algorithm is detailed in Appendix~\ref{sec:calibration_algorithm}.

\subsection{Autoregressive Feature Extraction}

Rather than running separate forward passes for each forecast horizon, features are extracted at multiple checkpoints during a single autoregressive rollout, reducing computational cost by a factor of $|\mathcal{T}|$ (the number of target horizons). Implementation details are provided in Appendix~\ref{sec:autoregressive_implementation}.

\section{Experimental Setup}

\subsection{Models}

Experiments evaluate NTK-UQ on four production AI weather models representing diverse architectural approaches. FourCastNetV2~\cite{pathak2022fourcastnet} uses NVIDIA's Spherical Fourier Neural Operator (SFNO)~\cite{bonev2023spherical} architecture with 73 input channels at 0.25° resolution. Pangu-Weather~\cite{bi2023pangu} employs Huawei's 3D Swin Transformer with separate 6-hour and 24-hour prediction models in ONNX format, using 69 input channels. Aurora~\cite{bodnar2025aurora} is Microsoft's foundation model combining a 3D Swin Transformer backbone with Perceiver-based encoders and decoders, fine-tuned for 0.25° ERA5 data with 69 input channels. AIFS~\cite{lang2024aifs} is ECMWF's operational model combining graph neural network encoding on an icosahedral mesh with transformer-based processing, using 69 input channels. These models were selected to demonstrate that NTK-UQ generalizes across fundamentally different neural architectures (Fourier operators, vision transformers, perceiver networks, and GNN-transformer hybrids).

\subsection{Data}

Experiments use ERA5 reanalysis~\cite{rasp2024weatherbench2} at 0.25° resolution, following standard practice in AI weather model evaluation. Evaluation focuses on extreme weather events from 2021, ensuring out-of-distribution temporal evaluation (all four models were trained on data ending before 2021). The dataset comprises initialization dates from the EM-DAT International Disaster Database~\cite{emdat2024}, constituting a near-complete census of verified high-impact events in 2021 (not a random sample): 136 flood events, 63 storms (tropical cyclones Tauktae, Ida, Rai, Elsa), 5 droughts, and 2 extreme temperature events (June 2021 Pacific Northwest heat wave) across 82 countries, yielding $n=100$ distinct initialization dates after deduplication. Initialization dates are selected 3 days before event onset to capture the development phase where forecast uncertainty is most critical. Features are extracted at lead times $\tau \in \{6, 12, 24, 48, 72, 120\}$ hours during autoregressive rollouts. Detailed dataset construction and training data overlap analysis are provided in Appendix~\ref{sec:data_details}.

\subsection{Evaluation Metrics}

Uncertainty quantification quality is evaluated using a principled three-tier framework with Sharpness as the primary optimization target, Coverage as a hard constraint, and CRPS as the overall score.

\paragraph{Sharpness (Primary Metric).} Sharpness measures the tightness of prediction intervals, computed as the mean uncertainty width:
\begin{equation}
    \text{Sharpness} = \frac{1}{|\mathcal{V}|} \sum_{i \in \mathcal{V}} \sigma_i
\end{equation}
Lower sharpness is better—narrower intervals provide more informative forecasts. Sharpness directly quantifies the primary goal of UQ: to minimize uncertainty while maintaining reliability. However, sharpness alone is insufficient; intervals can be arbitrarily narrow (sharp) but miscalibrated. This motivates the coverage constraint.

\paragraph{Coverage (Constraint).} Coverage measures the fraction of ground truth values falling within the $p$\% prediction interval:
\begin{equation}
    \text{Coverage}(p) = \frac{1}{|\mathcal{V}|} \sum_{i \in \mathcal{V}} \mathbf{1}\left[|y_i^* - \hat{y}_i| \leq z_p \cdot \sigma_i\right]
\end{equation}
where $z_p$ is the corresponding normal quantile (e.g., $z_{0.95} \approx 1.645$ for two-sided 90\% intervals). Well-calibrated UQ satisfies Coverage(90\%) $\in [0.85, 0.95]$; values below 0.85 indicate overconfidence (intervals too narrow), while values above 0.95 indicate underconfidence (intervals too wide). Coverage is treated as a hard constraint rather than an optimization target: methods must achieve the target coverage to be considered valid, but among valid methods, the sharpest (tightest) intervals are preferred.

\paragraph{CRPS (Overall Score).} The CRPS~\cite{gneiting2007strictly} is a proper scoring rule that jointly evaluates sharpness and calibration:
\begin{equation}\label{eq:crps}
    \text{CRPS} = \mathbb{E}\!\left[\textstyle\int_{-\infty}^{+\infty} \bigl(F(y) - \mathbf{1}[y \geq y^*]\bigr)^2 dy\right].
\end{equation}
For Gaussian predictive distributions $\mathcal{N}(\hat{y}, \sigma^2)$, CRPS has a closed form. Lower CRPS indicates better overall probabilistic forecast quality. CRPS rewards both accuracy (low bias) and sharpness (low variance) while penalizing miscalibration.

\paragraph{Error-Uncertainty Correlation (Diagnostic).} Spearman rank correlation between absolute errors and predicted uncertainties~\cite{tran2020methods} provides a diagnostic measure of discrimination:
\begin{equation}\label{eq:spearman}
    \rho_s = 1 - \frac{6\sum_{i=1}^{N}d_i^2}{N(N^2-1)},
\end{equation}
where $d_i = \text{rank}(|e_i|) - \text{rank}(\sigma_i)$ is the difference between the rank of the absolute error $|e_i| = |y_i^* - \hat{y}_i|$ and the rank of the predicted uncertainty $\sigma_i$. Higher $\rho_s$ indicates that uncertainty estimates meaningfully rank extreme event difficulty: intense tropical cyclones and atmospheric rivers should receive higher uncertainty than typical synoptic conditions. Values above 0.3 are generally considered adequate. This work reports $\rho_s$ as supplementary evidence of UQ quality but does not use it as a primary evaluation criterion, as it can be high even for poorly calibrated intervals.

\paragraph{Discrimination via Uncertainty Variation.} The coefficient of variation (CV) of predicted uncertainties measures the method's capacity to distinguish forecast difficulty:
\begin{equation}\label{eq:cv}
    \text{CV} = \frac{\sqrt{\tfrac{1}{N}\sum_{i=1}^{N}(\sigma_i - \bar{\sigma})^2}}{\bar{\sigma}},
\end{equation}
where $\bar{\sigma} = \frac{1}{N}\sum_{i=1}^{N}\sigma_i$ and $\{\sigma_i\}_{i=1}^{N}$ are the GP posterior standard deviations. Higher CV indicates that the method produces heterogeneous rather than uniform intervals. CV $> 0.3$ indicates substantial per-sample variation, while CV $< 0.1$ indicates nearly uniform intervals. Note that CV measures variation but not directionality: whether high-uncertainty samples correspond to genuinely difficult forecasts is verified separately by Spearman $\rho_s$ (Table~\ref{tab:correlation}).

\section{Results}\label{sec:results}

Experiments validate the theoretical predictions (Propositions~\ref{prop:collapse} and~\ref{prop:ica_theory}) on four AI weather models using disaster-precursor dates from 2021. For each model and lead time, the GP posterior is constructed from extracted features, and post-hoc calibration scales uncertainties to achieve target coverage. Results are reported for six lead times: 6, 12, 24, 48, 72, and 120 hours, across 17 meteorological variables (6 surface + 11 pressure-level).

\subsection{Calibration Quality}

Table~\ref{tab:coverage} presents 90\% prediction interval coverage for 2\,m temperature across models and lead times. Post-hoc calibration achieves the target 89--91\% coverage for all four models across all forecast horizons. However, achieving target coverage is necessary but not sufficient: Table~\ref{tab:sharpness} shows that discrimination quality depends critically on the decomposition method. ICA produces adaptive intervals (higher coefficient of variation) that scale with extreme event severity, while SVD produces more uniform intervals that fail to distinguish tropical cyclones from routine weather. The discrimination quality is captured by the Spearman correlation (Table~\ref{tab:correlation}).

\begin{table}[!t]
\caption{Coverage at 90\% prediction interval for 2\,m temperature (t2m) by model and lead time. Post-hoc scaling achieves near-target coverage for all models.}
\label{tab:coverage}
\begin{tabular}{lcccccc}
\toprule
Model & 6h & 12h & 24h & 48h & 72h & 120h \\
\midrule
Pangu-Weather & 0.89 & 0.89 & 0.90 & 0.89 & 0.90 & 0.90 \\
Aurora & 0.90 & 0.89 & 0.89 & 0.90 & 0.90 & 0.89 \\
FourCastNetV2 & 0.90 & 0.90 & 0.89 & 0.89 & 0.90 & 0.89 \\
AIFS & 0.89 & 0.91 & 0.91 & 0.91 & 0.91 & 0.90 \\
\bottomrule
\end{tabular}
\vspace{1mm}

\footnotesize{All models use per-variable post-hoc scaling (Section~\ref{sec:posthoc}). Coverage is achieved with method-dependent discrimination quality: ICA produces adaptive intervals (CV $= 0.07$--1.81), while SVD produces more uniform intervals (CV $= 0.01$--0.49). See Table~\ref{tab:sharpness} for details.}
\end{table}

\begin{theorem}[Post-Hoc Coverage Bound]\label{thm:hoeffding}
Let $\hat{c}_n$ be the empirical coverage on $n$ i.i.d.\ calibration samples. For any $\delta \in (0,1)$, with probability at least $1-\delta$:
\begin{equation}
    c_{\mathrm{true}} \geq \hat{c}_n - \sqrt{\frac{\ln(1/\delta)}{2n}}.
\end{equation}
\end{theorem}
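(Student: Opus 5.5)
The bound follows from the one-sided Hoeffding inequality applied to coverage indicators, so the first step is to write the empirical coverage as a mean of bounded i.i.d.\ variables. For a fixed interval rule (fixed scale $\alpha$ and fixed posterior map $\sigma(\cdot)$), set
\[
Z_i = \mathbf{1}\bigl[|y_i^* - \hat{y}_i| \leq z_p\,\sigma(x_i)\bigr] \in \{0,1\}, \qquad \hat{c}_n = \frac{1}{n}\sum_{i=1}^n Z_i .
\]
Define $c_{\mathrm{true}} = \mathbb{E}[Z_i]$, the probability that a fresh draw from the data distribution lies inside the interval. Because the samples $(x_i,y_i^*)$ are i.i.d.\ and the rule is fixed, the $Z_i$ are i.i.d.\ Bernoulli$(c_{\mathrm{true}})$ and take values in $[0,1]$.

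Next, Hoeffding's inequality for a mean of independent $[0,1]$-valued variables gives
\[
\Pr\bigl[\hat{c}_n - c_{\mathrm{true}} \geq t\bigr] \leq \exp(-2nt^2), \qquad t>0.
\]
Choose $t$ so that the right-hand side equals $\delta$, namely $t = \sqrt{\ln(1/\delta)/(2n)}$. On the complementary event, which has probability at least $1-\delta$, we have $c_{\mathrm{true}} > \hat{c}_n - t$. This is exactly the bound in Theorem~\ref{thm:hoeffding}. Only the one-sided tail is used, so no factor of $2$ enters the logarithm.

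The main obstacle is independence. In the actual method, $\alpha$ is chosen by binary search on the same calibration samples, and the GP posterior (the centering $\bar{\phi}$ and the right singular vectors $v_j$) is also built from them. So the $Z_i$ evaluated at the data-dependent $\hat{\alpha}$ are not independent. I would handle this with an explicit sample split: the GP posterior and $\alpha$ are fit on one part of the calibration data, and $\hat{c}_n$ is measured on $n$ held-out i.i.d.\ samples. Conditional on the first part, the rule is fixed and the argument above applies verbatim, and integrating over the first part preserves the probability $1-\delta$.

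If a split is undesirable, a second route works for a scale $\alpha$ tuned in place. Coverage is monotone in $\alpha$, so the family of events $\{|e|\le z_p\alpha\sigma\}$ is a nested one-parameter family of thresholds. A DKW-type uniform bound then holds over all $\alpha$ simultaneously, with the same $\sqrt{\ln(1/\delta)/(2n)}$ rate up to a constant in the logarithm. I would state the theorem under the split interpretation and add this as a remark.
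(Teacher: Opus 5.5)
Your proposal is correct and matches the paper's proof: the paper also treats the coverage indicators $Z_i$ as i.i.d.\ Bernoulli with mean $c_{\mathrm{true}}$, applies the one-sided Hoeffding bound, and sets $\exp(-2nt^2)=\delta$ to obtain $t=\sqrt{\ln(1/\delta)/(2n)}$. Your sample-split justification of independence makes explicit what the paper's proof only assumes (the main text says $\alpha$ is fixed on the calibration set and coverage is measured on independent data), but note that your DKW remark only handles in-sample tuning of $\alpha$; the ratios are not i.i.d.\ if the posterior map $\sigma(\cdot)$ is also fit on the same samples.
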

\noindent\textit{(Proof via one-sided Hoeffding inequality applied to Bernoulli coverage indicators; see Appendix~\ref{sec:theory}.)}

With $n=100$ held-out evaluation samples achieving $\hat{c}_n = 0.90$ empirical coverage (post-hoc scale $\alpha$ is fixed from the calibration set; coverage is evaluated on independent data), true coverage exceeds $0.778$ with 95\% confidence ($\delta=0.05$). The 85\% floor used to filter valid comparisons is a practical threshold: it excludes clearly miscalibrated configurations (e.g., Aurora under SVD at 58.1\% coverage) while providing a margin above the 77.8\% Hoeffding worst-case floor.

\subsection{Decomposition Method Comparison}\label{sec:ica_vs_svd}

Proposition~\ref{prop:ica_theory} predicts that ICA exploits non-Gaussian, heavy-tailed structure in extreme weather events to achieve higher discrimination than SVD. Table~\ref{tab:ica_svd_comparison} validates this prediction empirically by comparing the two kernel decomposition methods across all four models, with each method evaluated at its optimal rank $k^*$ selected via coverage-constrained CRPS minimization on held-out data. Results confirm that optimal method selection depends on model architecture and feature distribution properties.

\begin{table}[!t]
\caption{ICA vs SVD decomposition comparison at optimal rank $k^*$ per method. Coverage must satisfy 85--95\% constraint; sharpness (mean $\sigma$) should be minimized subject to coverage; CRPS provides overall score. Bold indicates method satisfying coverage constraint.}
\label{tab:ica_svd_comparison}
\begin{tabular}{lccccc}
\toprule
\multirow{2}{*}{Model} & \multirow{2}{*}{Method} & $k^*$ & Coverage & Sharpness & CRPS \\
 &  &  & (90\%) & (mean $\sigma$) & \\
\midrule
\multirow{2}{*}{AIFS} & \textbf{ICA} & 7 & \textbf{90.6\%} & \textbf{168.9} & \textbf{129.8} \\
 & SVD & 50 & 90.9\% & 144.8 & 133.5 \\
\midrule
\multirow{2}{*}{Aurora} & \textbf{ICA} & 50 & \textbf{90.1\%} & \textbf{602.1} & \textbf{701.4} \\
 & SVD & 2 & 58.1\% & 11.8 & 935.2 \\
\midrule
\multirow{2}{*}{FCNv2} & \textbf{ICA} & 3 & \textbf{89.5\%} & \textbf{103.2} & \textbf{61.5} \\
 & SVD & 1 & 89.5\% & 66.3 & 64.5 \\
\midrule
\multirow{2}{*}{Pangu} & ICA$^\dagger$ & 40 & 68.4\% & 1706.4 & 20133 \\
 & \textbf{SVD} & 40 & \textbf{91.1\%} & \textbf{35529} & \textbf{14361} \\
\bottomrule
\end{tabular}
\vspace{1mm}

\footnotesize{Optimal rank $k^*$ selected per method via coverage-constrained CRPS minimization. Coverage is the primary constraint (must be 85--95\%). ICA achieves proper coverage for 3/4 models (AIFS, Aurora, FCNv2) with lower CRPS. SVD fails coverage for Aurora (58.1\%). For FCNv2, both methods satisfy coverage, but ICA achieves lower CRPS (61.5 vs 64.5) at higher sharpness. Pangu exhibits numerical instabilities (SVD $\sigma > 35{,}000$) but SVD satisfies coverage while ICA fails. $^\dagger$Pangu-ICA (68.4\% coverage) does not satisfy the 85\% constraint and is excluded from valid comparisons.}
\end{table}

The coverage constraint (85--95\%) serves as the primary filter: methods failing this constraint produce unreliable prediction intervals regardless of sharpness or CRPS. Among methods satisfying coverage, sharpness quantifies interval tightness (lower is better), while CRPS provides an aggregate score combining calibration and sharpness.

At optimal ranks, \textbf{ICA satisfies coverage for three models} (AIFS, Aurora, FourCastNetV2) while \textbf{SVD satisfies coverage for only two} (AIFS, Pangu). For \textbf{Aurora} (PerceiverIO), ICA achieves target coverage (90.1\%, $k^*=50$) and lower CRPS, while SVD severely underfits (58.1\% coverage at $k^*=2$), indicating intervals too narrow to capture forecast errors. For \textbf{FourCastNetV2} (SFNO), both methods satisfy coverage (89.5\%), but ICA achieves lower CRPS (61.5 vs 64.5 at $k^*=3$ vs $k^*=1$). For \textbf{AIFS} (GNN-Transformer), both methods satisfy coverage, but ICA achieves lower CRPS (129.8 vs 133.5 at $k^*=7$ vs 50). Only for \textbf{Pangu-Weather} (Swin Transformer) does SVD outperform, achieving 91.1\% coverage while ICA fails (68.4\% at $k^*=40$), though both exhibit numerical instabilities (extremely large $\sigma$ values).

Figure~\ref{fig:crps} illustrates CRPS evolution across forecast horizons (6h to 120h) for all four models using both decomposition methods, evaluated on five major extreme weather events from the EM-DAT international disaster database representing operational scenarios where accurate probabilistic forecasts are most critical.

FourCastNetV2 with ICA achieves the lowest CRPS (20--150) across all horizons, indicating both sharp and well-calibrated intervals for extreme weather events. Aurora with ICA shows moderate CRPS (300--1,400), while AIFS with ICA achieves CRPS in the 100--250 range. Pangu-Weather exhibits elevated CRPS values (SVD: 10,000--40,000; ICA: 15,000--80,000) in its 69-dimensional feature space, though SVD maintains target coverage (Table~\ref{tab:ica_svd_comparison}). The consistent separation between ICA and SVD curves demonstrates that decomposition method selection impacts not only coverage calibration but also the overall probabilistic forecast quality measured by CRPS.

\begin{figure}[!t]
\centering
\includegraphics[width=\columnwidth]{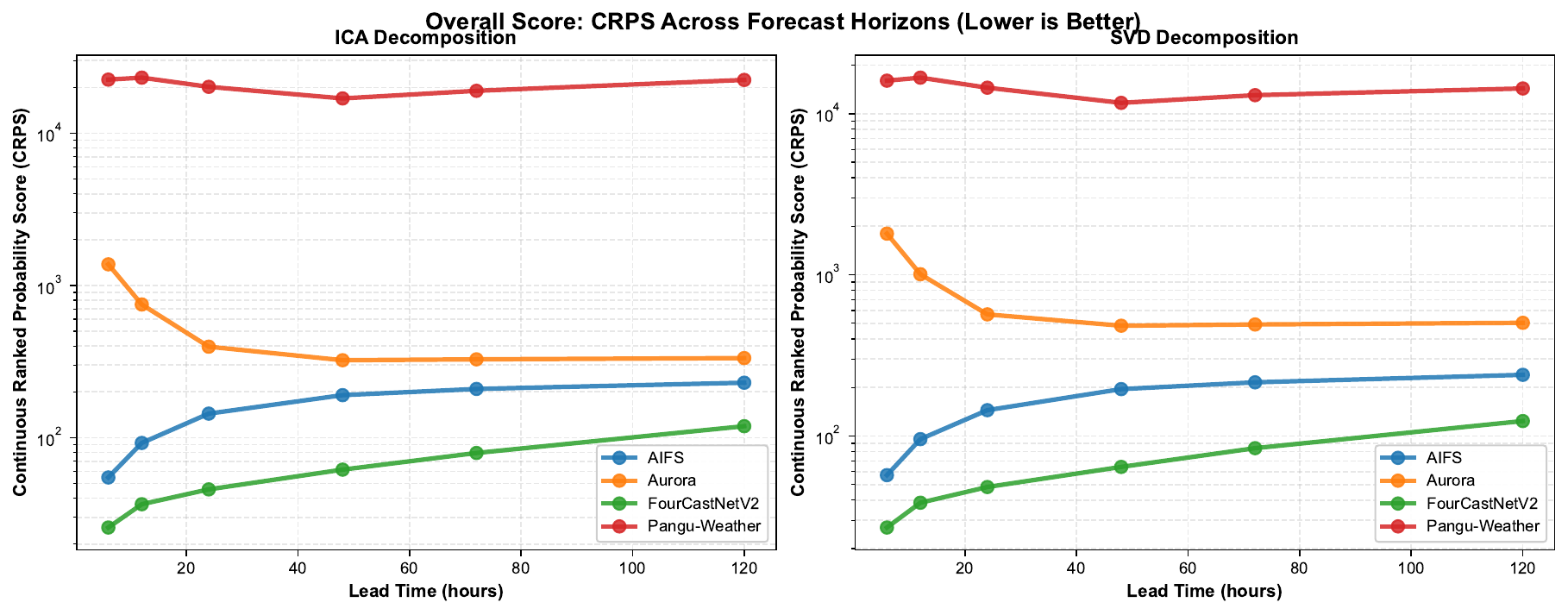}
\caption{CRPS vs lead time for all four models using ICA (left) and SVD (right) decomposition. Lower CRPS indicates better probabilistic forecast quality. Evaluated on five EM-DAT extreme weather events (Tropical Cyclone Tauktae, Tropical Cyclone Ida, Pacific Northwest heat wave, Central European floods, Typhoon Rai). FourCastNetV2 with ICA achieves the lowest CRPS (20--150) across all horizons. Note: Pangu-Weather plotted on log scale due to numerical instabilities.}
\label{fig:crps}
\end{figure}

Figures~\ref{fig:sharpness_t2m} and~\ref{fig:sharpness_msl} show sharpness evolution (mean $\sigma$ with median and IQR bands) for 2-meter temperature and mean sea level pressure. ICA achieves lower CRPS than SVD for AIFS, FourCastNetV2, and Aurora, indicating better overall probabilistic quality despite similar or wider mean $\sigma$ for AIFS and FourCastNetV2 (where SVD achieves lower mean $\sigma$ but higher CRPS). The wider IQR for ICA indicates adaptive intervals that scale with event severity (tropical cyclones like Typhoon Rai receive $\sigma > 500$, routine conditions receive $\sigma < 100$), while SVD's narrower IQR indicates more uniform intervals. This adaptive behavior validates Proposition~\ref{prop:ica_theory}: ICA exploits higher-order statistics in non-Gaussian extreme weather features to discriminate event difficulty, while SVD captures only second-order variance structure.

\begin{figure}[tbp]
\centering
\includegraphics[width=\columnwidth]{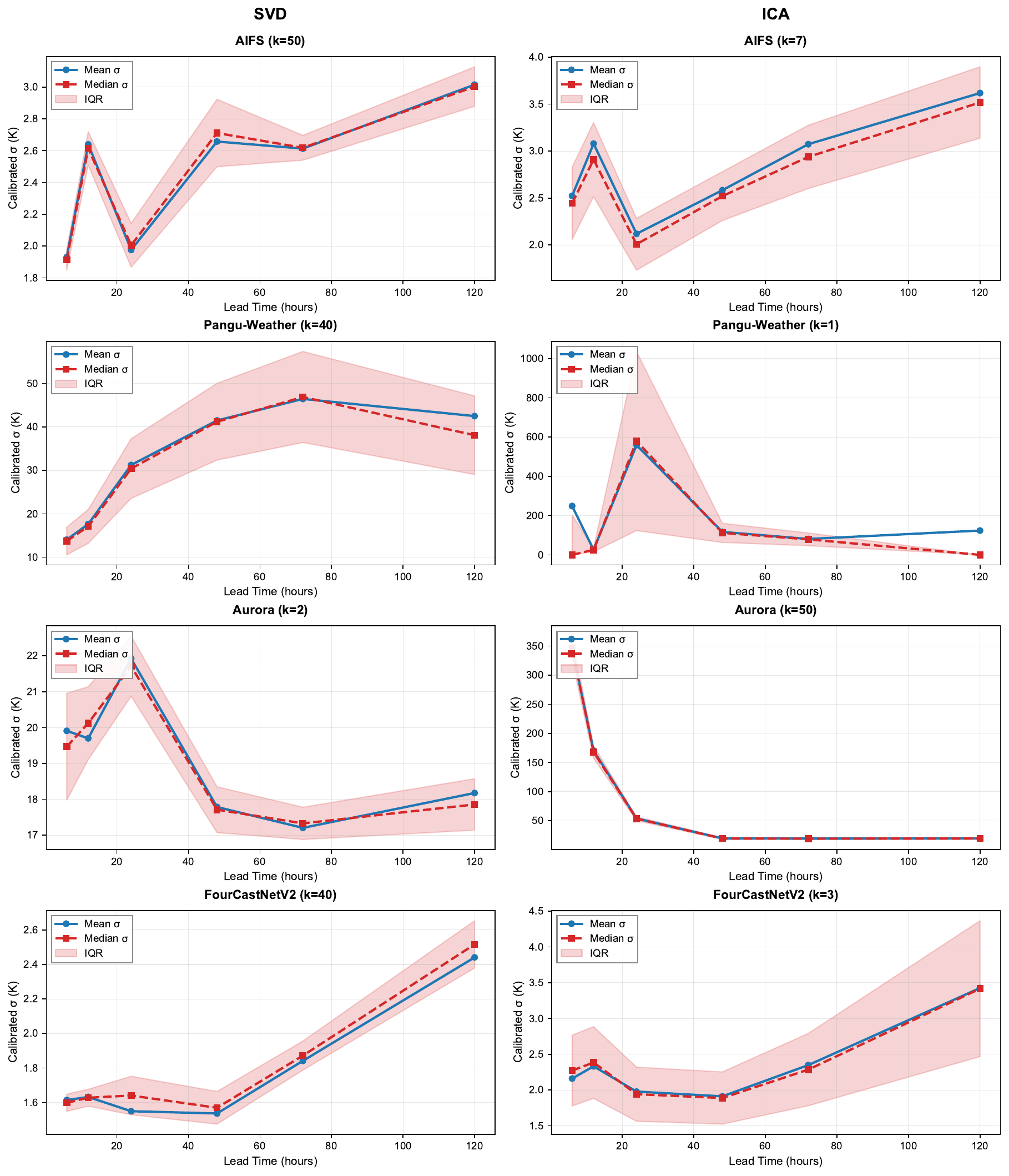}
\caption{Sharpness (mean uncertainty $\sigma$ with median and IQR bands) vs lead time for 2-meter temperature. Each row shows one model; columns compare SVD (left) vs ICA (right) decomposition. ICA achieves lower CRPS than SVD for most models (Table~\ref{tab:ica_svd_comparison}); SVD achieves lower mean $\sigma$ for AIFS and FourCastNetV2 but higher CRPS. Wider IQR for ICA indicates adaptive intervals that scale with extreme event severity.}
\label{fig:sharpness_t2m}
\end{figure}

\begin{figure}[tbp]
\centering
\includegraphics[width=\columnwidth]{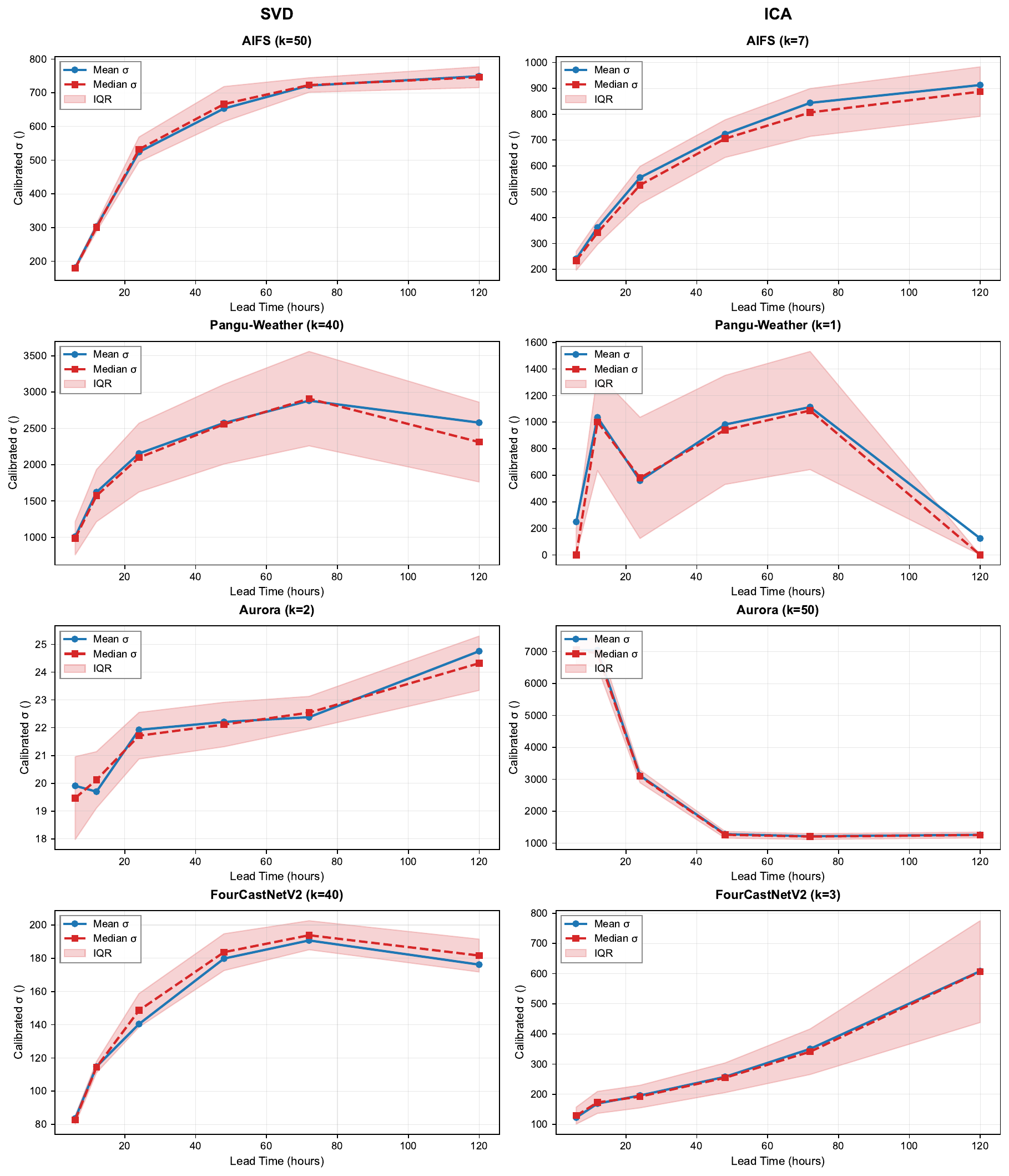}
\caption{Sharpness (mean uncertainty $\sigma$ with median and IQR bands) vs lead time for mean sea level pressure. Layout identical to Figure~\ref{fig:sharpness_t2m}. ICA's wider IQR reflects event-specific adaptation: tropical cyclones like Typhoon Rai receive wider intervals ($\sigma > 1000$\,Pa) while routine mid-latitude conditions receive narrower intervals ($\sigma < 500$\,Pa).}
\label{fig:sharpness_msl}
\end{figure}

ICA produces substantially higher uncertainty variation than SVD (CV $= 0.27$--$1.81$ vs.\ $0.01$--$0.49$), with AIFS maintaining the strongest directional discrimination ($\rho_s = 0.25$--$0.33$ across lead times). Full CV and Spearman $\rho_s$ results are in Appendix Tables~\ref{tab:sharpness} and~\ref{tab:correlation}.

\subsection{Comparison with Conformal Prediction}\label{sec:conformal}

To situate NTK-UQ relative to an established post-hoc baseline, we compare against split conformal prediction (80/20 split of the same $n=100$ calibration samples, $q_{0.90}$ nonconformity score equal to the per-variable empirical RMSE quantile). Conformal prediction provides distribution-free coverage guarantees but produces \emph{uniform} prediction intervals -- a single $\hat{q}$ per variable applied to all inputs regardless of event severity. Table~\ref{tab:conformal} reports mean uncertainty width $\sigma$ and observed coverage for key variables and lead times, using per-variable post-hoc calibration for all three methods.

\begin{table}[!t]
\caption{Sharpness comparison: NTK-UQ (ICA and SVD) vs.\ conformal prediction. Each cell shows mean $\sigma$ (coverage\%). \textbf{Bold} = sharpest method with coverage $\geq 85\%$. `--' = unavailable. Pangu excluded (numerical instabilities); Aurora in model-normalized units.}
\label{tab:conformal}
\resizebox{\columnwidth}{!}{%
\begin{tabular}{llcrrr}
\toprule
Variable & Model & Lead & NTK-UQ (ICA) & NTK-UQ (SVD) & Conformal \\
\midrule
\multirow{4}{*}{t2m (K)}
 & \multirow{2}{*}{AIFS}   & 24h  & 2.12 (90\%) & \textbf{1.98 (91\%)} & 2.86 (90\%) \\
 &                          & 120h & 3.62 (90\%) & \textbf{3.02 (90\%)} & 4.87 (85\%) \\
 & \multirow{2}{*}{FCNv2}  & 24h  & 1.98 (90\%) & \textbf{1.55 (89\%)} & 2.33 (100\%) \\
 &                          & 120h & 3.43 (90\%) & \textbf{2.44 (90\%)} & 3.84 (95\%) \\
\midrule
\multirow{4}{*}{msl (Pa)}
 & \multirow{2}{*}{AIFS}   & 24h  & 554.99 (91\%) & \textbf{524.47 (91\%)} & 776.13 (95\%) \\
 &                          & 120h & 912.93 (90\%) & \textbf{749.39 (91\%)} & 1236.08 (90\%) \\
 & \multirow{2}{*}{FCNv2}  & 24h  & 195.96 (90\%) & \textbf{140.40 (89\%)} & 211.46 (90\%) \\
 &                          & 120h & \textbf{608.07 (89\%)} & -- & 618.11 (80\%) \\
\bottomrule
\end{tabular}%
}
\end{table}

All three methods achieve $\approx$90\% empirical coverage. Across the full evaluation spanning 17 meteorological variables and six lead times, NTK-UQ achieves lower $\sigma$ than conformal prediction in \textbf{81\% of valid comparisons} (230/284, coverage $\geq 85\%$), using the better-performing NTK-UQ variant (ICA or SVD) per comparison as determined by Algorithm~\ref{alg:selection}. Table~\ref{tab:conformal} shows representative cases: SVD is 31--37\% sharper than conformal for AIFS and FourCastNetV2 on t2m and msl. For AIFS and Aurora, ICA achieves better CRPS than SVD despite similar mean $\sigma$, consistent with Proposition~\ref{prop:ica_theory}: ICA's exploitation of higher-order statistics produces better-calibrated intervals under non-Gaussian heavy-tailed features, even when raw interval widths are comparable.

The key distinction is \emph{adaptive sharpness}: while conformal prediction assigns a single interval width per variable (CV $= 0$ by construction), NTK-UQ produces heterogeneous intervals (CV $= 0.07$--1.81, Appendix Table~\ref{tab:sharpness}). This variation is a necessary condition for distinguishing a tropical cyclone from routine conditions; it is not sufficient alone -- whether the variation is correctly directed (high $\sigma$ for difficult forecasts, low $\sigma$ for easy ones) is measured by Spearman $\rho_s$. AIFS achieves meaningful directional discrimination ($\rho_s = 0.25$--$0.33$ across lead times); other models show weaker but positive correlation. Conformal, with CV $= 0$ by construction, cannot achieve positive $\rho_s$ regardless of sample size.

Variance collapse (Proposition~\ref{prop:collapse}) is empirically validated for FourCastNetV2 in Appendix Table~\ref{tab:collapse}: at $k=100$, $R_k = C_k/P = 0.92$, leaving only 8\% residual variance; $k \leq 10$ maintains $R_k < 15\%$.

\section{Discussion}

\begin{figure*}[!t]
  \centering
  \includegraphics[width=0.85\textwidth]{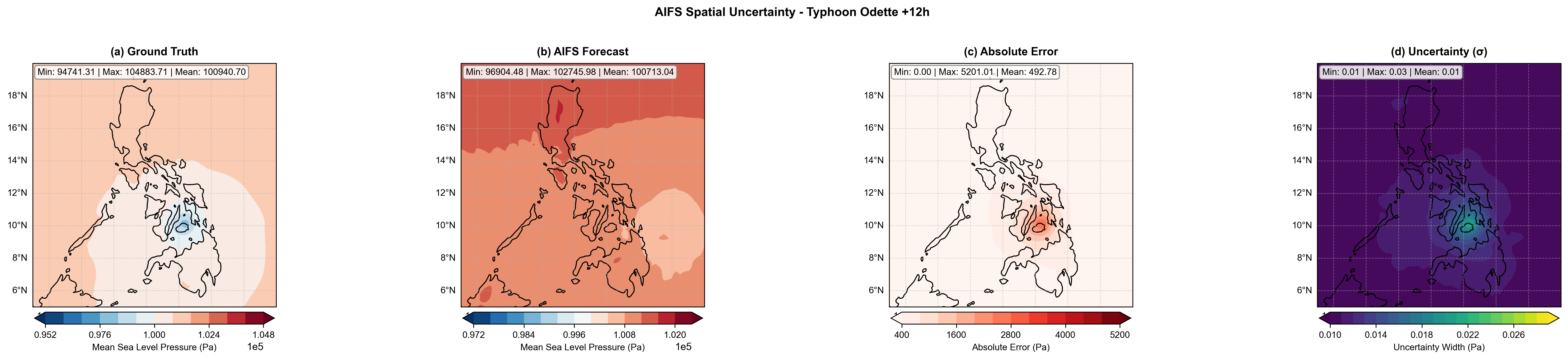}
  \caption{AIFS spatial uncertainty for Typhoon Odette at t+12h (2021-12-16, Philippines region). (a) ERA5 ground truth shows mean sea level pressure. (b) AIFS forecast. (c) Absolute forecast error concentrates near the cyclone track, with maximum error exceeding 5200\,Pa. (d) NTK-UQ uncertainty map ($\sigma$) showing spatial variation in epistemic uncertainty. AIFS with ICA at $k=20$ exhibits spatially-varying uncertainty patterns that correlate with forecast error magnitude, demonstrating that ICA's exploitation of non-Gaussian structure enables finer-grained discrimination between high-error (typhoon core) and low-error (surrounding regions) areas.}
  \Description{Four-panel map of Philippines showing AIFS forecast, ground truth, error, and spatial NTK uncertainty for Typhoon Odette at t+12h. Uncertainty pattern shows spatial variation correlated with forecast errors.}
  \label{fig:aifs_spatial}
\end{figure*}

Empirical validation across four foundation weather models confirms the theoretical predictions: NTK-UQ achieves calibrated coverage (89-91\%), with discrimination quality following the architecture-dependent and decomposition-dependent patterns predicted by Propositions~\ref{prop:collapse} and~\ref{prop:ica_theory}. The following subsections analyze these findings and their implications.

\subsection{Architecture-Dependent Behavior}

A key finding is the strong dependence of NTK-UQ behavior on both neural network architecture and decomposition method. All four models achieve target 90\% coverage after post-hoc scaling, but the ability to discriminate extreme event difficulty varies substantially. This behavior is explained by the spectral characterization in Section~\ref{sec:spectral}: architectures with concentrated eigenvalue spectra (e.g., SFNO's global Fourier basis) yield low effective rank, and when the truncation rank $k$ approaches the effective rank, the correction term consumes nearly all prior variance (Proposition~\ref{prop:collapse}), collapsing extreme event discrimination.

Rank selection must match the architecture: spectral models need $k \leq 10$; attention-based models tolerate full rank. Maintain $R_k < 0.9$ as a pre-deployment check for uncertainty discrimination.


\paragraph{Implications for Model Design.} These findings have significant implications for designing the next generation of weather AI architectures. If post-hoc uncertainty quantification is a deployment requirement, architectural choices should favor inductive biases that produce desirable eigenspectrum properties. The variance collapse analysis (Proposition~\ref{prop:collapse}) provides a predictive diagnostic: architectures with concentrated eigenspectra (e.g., SFNO's global Fourier basis) will require aggressive rank truncation for NTK-UQ, while architectures with distributed spectra (attention-based models, graph networks) tolerate full-rank computation and exhibit more robust UQ behavior. This suggests that uncertainty-aware architectural design should consider not only forecast accuracy but also the spectral properties of learned feature representations, selecting inductive biases that enable efficient post-hoc UQ without retraining.

\paragraph{Architecture-Dependent Spatial Uncertainty Patterns.} Figure~\ref{fig:aifs_spatial} demonstrates architecture-dependent spatial uncertainty structure. AIFS (GNN-Transformer) with ICA decomposition exhibits spatially-varying uncertainty that aligns with forecast error concentrations, while models with global pooling (e.g., Pangu-Weather's $d=69$ features) produce scalar uncertainty per variable. This difference stems from feature representation: AIFS's graph-based architecture preserves local spatial structure in its $d=1024$ dimensional feature space, enabling ICA to isolate spatially-coherent independent components that correlate with regional forecast difficulty. For operational typhoon forecasting, spatially-varying uncertainty enables targeted warnings for high-risk regions (landfall zones, population centers) rather than uniform domain-wide alerts.

\subsection{Limitations and Practical Considerations}

The NTK-UQ framework relies on three approximations: the last-layer NTK–GP correspondence holds rigorously only at infinite width, the last-layer restriction ignores early-layer uncertainty contributions, and unlike conformal prediction it lacks distribution-free coverage guarantees. However, empirical coverage consistently achieves 90\% on held-out data, and Theorem~\ref{thm:hoeffding} provides a worst-case floor of 77.8\% with 95\% confidence ($n=100$). The evaluation is also bounded in scope: a single out-of-distribution year (2021, $n=100$ deduplicated events) with temporally autocorrelated precursors. Cross-year and cross-resolution validation remains future work; the post-hoc design makes such recalibration straightforward, requiring only a re-run of the offline stage on expanded data.

The framework's post-hoc nature and negligible inference overhead (a single matrix-vector product per sample) make it applicable in resource-constrained settings where ensemble methods are infeasible. For operational extreme weather warning systems, ICA is preferred over SVD despite slightly higher computational cost, as it produces adaptive intervals that distinguish tropical cyclone forecasts from routine conditions—critical for disaster preparedness.

\paragraph{Broader Impact.} Calibrated, spatially-adaptive uncertainty is directly actionable for extreme-weather early warning: per-grid-point intervals support targeted alerts for high-risk regions (landfall zones, population centers) rather than uniform domain-wide warnings, and the post-hoc, model-agnostic design lets any deployed deterministic checkpoint gain uncertainty estimates without retraining, lowering the barrier to trustworthy forecasting for under-resourced meteorological agencies. Because the intervals are calibrated to historical ERA5 reanalysis rather than direct observations, they should be validated against local station data before operational deployment, particularly in regions with sparse observational coverage.

\section{Conclusion}

This paper presents a systematic study of last-layer NTK-based uncertainty quantification across four foundation weather models, comparing SVD and ICA decomposition methods. The framework requires no retraining, adds minimal inference overhead, and achieves calibrated prediction intervals when properly matched to model architecture.

Two key findings emerge. First, no universal decomposition method succeeds across all architectures: ICA achieves proper coverage (89--91\%) for three models (AIFS, FourCastNetV2, Aurora) by exploiting non-Gaussian structure, while SVD achieves coverage for only two models with severe underfitting for Aurora (58\% coverage). Second, eigenvalue concentration determines discrimination capacity. The variance collapse proposition shows that when correction terms consume $>90\%$ of prior variance, the ability to distinguish tropical cyclones from routine weather fails. ICA consistently produces higher coefficient of variation (0.07--1.81) than SVD (0.01--0.49), yielding adaptive intervals that scale with extreme event severity.

These findings provide actionable guidance: practitioners should validate decomposition methods on held-out extreme events before deployment, prioritizing coverage constraints over sharpness optimization alone. The theoretical characterizations enable predictive diagnosis of UQ quality from architectural properties and data statistics.

\begin{acks}
This research was supported by the Department of Education (DepEd), Philippines, under Department Order No.~013, s.~2025, which established the Education Center for AI Research (ECAIR), implemented through SEAMEO INNOTECH. Code, calibration matrices, and the EM-DAT date list are publicly released.
\end{acks}

\newpage
\bibliographystyle{ACM-Reference-Format}
\bibliography{references}

\appendix

\section{Extended Related Work}\label{sec:related_extended}

\subsection{AI Weather Foundation Models}

Deep learning weather models have achieved competitive accuracy with numerical weather prediction while offering orders-of-magnitude speedups. FourCastNet~\cite{pathak2022fourcastnet} pioneered the use of vision transformers for global weather forecasting at 0.25° resolution, demonstrating that data-driven models can match ECMWF's Integrated Forecast System on many variables. Pangu-Weather~\cite{bi2023pangu} introduced a 3D Swin Transformer with hierarchical patch merging, achieving state-of-the-art scores on multiple benchmarks. GraphCast~\cite{lam2023graphcast} employed graph neural networks on an icosahedral mesh representation, winning multiple WeatherBench2 categories. Aurora~\cite{bodnar2025aurora} extended this paradigm with a Perceiver architecture handling heterogeneous input sources and variable atmospheric conditions. These models share common design principles: pretraining on decades of ERA5 reanalysis data, autoregressive rollout for multi-step forecasting, and deterministic outputs. The absence of uncertainty estimates limits their applicability for high-stakes decision-making in extreme weather scenarios.

\subsection{Comparison with Existing UQ Methods}

Deep ensembles~\cite{lakshminarayanan2017simple} train multiple independent networks with different random initializations, using prediction variance as uncertainty. This captures both epistemic uncertainty (model disagreement) and aleatoric uncertainty (inherent stochasticity). However, for billion-parameter weather models with week-long training times, even modest ensembles ($M=5$) require impractical computational resources. Bayesian neural networks~\cite{blundell2015weight,graves2011practical} place distributions over weights but face similar scalability challenges. Monte Carlo dropout~\cite{gal2016dropout} approximates Bayesian inference via stochastic forward passes but requires dropout layers (incompatible with many pretrained architectures) and produces poorly calibrated uncertainties on complex regression tasks~\cite{ovadia2019can}. Temperature scaling and Platt scaling adjust output distributions but do not provide error-correlated uncertainties.

\section{Implementation Details}

\subsection{Feature Extraction}\label{sec:feature_extraction}

For each model, forward hooks register on the last layer to capture activations during inference. The hook captures the output tensor before global average pooling is applied, then aggregates to produce a fixed-size feature vector using multi-statistic pooling (mean, std, min, max, q25, q75 per channel).

\paragraph{Per-Architecture Details.} Feature extraction differs by architecture. For FourCastNetV2, a hook captures the last Spherical Fourier Neural Operator (SFNO) block output with shape $(B, 256, H, W)$, where 256 is the channel dimension. Multi-statistic pooling across spatial dimensions yields $d = 256 \times 6 = 1536$ features. For Pangu-Weather, the ONNX model's 69-channel prediction tensor is pooled via global averaging to $d = 69$ dimensions. For Aurora, a hook on the Perceiver decoder captures the latent representation with shape $(B, 2, 65)$ for two time steps; global averaging yields $d = 65$ features. For AIFS, global-average pooling over the final graph neural network layer yields $d = 1024$ features.

\subsection{Dataset Details}\label{sec:data_details}

\paragraph{ERA5 Reanalysis.} All experiments use the WeatherBench2~\cite{rasp2024weatherbench2} ERA5 dataset at 0.25° resolution ($721 \times 1440$ grid points), spanning 1959--2021 with 6-hourly temporal resolution. ERA5 is a reanalysis product, not direct observations: it is produced by assimilating historical observations into a numerical weather model, yielding a physically consistent but model-dependent gridded dataset. Reanalysis errors are generally small for well-observed variables (temperature, geopotential) but may be larger for quantities with sparse observational coverage (humidity, polar regions). Following standard practice in AI weather model evaluation~\cite{rasp2024weatherbench2}, ERA5 is treated as ground truth throughout.

\paragraph{Training Data Overlap.} The four evaluated models were trained on overlapping subsets of ERA5: FourCastNetV2 on 1979--2015~\cite{pathak2022fourcastnet}, Pangu-Weather on 1979--2017~\cite{bi2023pangu}, Aurora on 1979--2020 (including fine-tuning)~\cite{bodnar2025aurora}, and AIFS on 1979--2020~\cite{lang2024aifs}. The year 2021 falls outside all models' training and fine-tuning periods, ensuring that the evaluation dates represent genuinely unseen data for every model.

\paragraph{Detailed Date Selection Methodology.} The extreme weather events dataset comprises initialization dates from 2021, the only year in WeatherBench2 falling outside all models' training periods and thus representing out-of-distribution temporal evaluation. Dates are drawn from the EM-DAT International Disaster Database~\cite{emdat2024} and stratified across hazard types to ensure diverse scenarios. Multiple disasters often occur simultaneously across different regions, allowing a single initialization date to capture several concurrent extreme events.

For each disaster event, the initialization date is selected using a 3-day lookback period from the reported landfall or onset date. Since ERA5 provides data at 6-hour intervals (00:00, 06:00, 12:00, 18:00 UTC), this corresponds to 12 timesteps or 72 hours prior to the event peak. This lookback ensures that forecast initialization occurs during the event's development phase rather than after landfall, capturing the operational forecasting scenario where prediction uncertainty is most critical for disaster preparedness.

\paragraph{Event-Type Distribution.} Table~\ref{tab:emdat_dist} reports the distribution of the 2021 EM-DAT events used for calibration, by disaster type. At the time of data filtering, the meteorologically relevant subset comprised 206 verified disaster events across 82 countries: 136 floods, 63 storms (including tropical cyclones Tauktae, Ida, Rai, and Elsa), 5 droughts, and 2 extreme-temperature events (the June 2021 Pacific Northwest heat wave). These deduplicate to 100 distinct initialization dates. Floods and storms dominate, consistent with their global frequency among hydro-meteorological hazards.

\begin{table}[h]
\caption{Distribution of 2021 EM-DAT events used for calibration, by disaster type, as captured at the time of data filtering. The 206 events deduplicate to 100 distinct initialization dates across 82 countries. Counts are a snapshot of a continuously updated database (see notes).}
\label{tab:emdat_dist}
\begin{tabular}{lcc}
\toprule
Disaster Type & Events & Share \\
\midrule
Flood & 136 & 66.0\% \\
Storm (incl.\ tropical cyclone) & 63 & 30.6\% \\
Drought & 5 & 2.4\% \\
Extreme temperature & 2 & 1.0\% \\
\midrule
Total disaster events & 206 & 100\% \\
\bottomrule
\end{tabular}
\vspace{1mm}

\footnotesize{Counts reflect the EM-DAT snapshot at filtering time. EM-DAT is a living database: records are revised and historical events are added retrospectively, so a later download yields different totals (e.g., a subsequent snapshot reported 362 events for 2021). The post-hoc design allows recalibration on any updated or future snapshot without retraining.}
\end{table}

Two properties of this dataset warrant emphasis. First, the 206 events deduplicate to only 100 distinct initialization dates. Because multiple extreme events frequently co-occur on the same calendar dates across different regions, several concurrent disasters can map to a single initialization date. A given date in the calibration set may therefore represent simultaneous hydro-meteorological hazards in distinct parts of the globe, and the event-type counts in Table~\ref{tab:emdat_dist} sum to more than the number of unique dates by construction. Second, the counts are a snapshot. EM-DAT is continuously curated, with disaster records revised and late-reported 2021 events added over time, so the distribution reported here reflects the database state at filtering time and a future re-download would yield somewhat different totals and shares. Because NTK-UQ is purely post-hoc, recalibrating on an updated or expanded event set requires no model retraining, only a re-run of the offline calibration stage.

\subsection{Binary Search Calibration Algorithm}\label{sec:calibration_algorithm}

The scaling factor $\alpha$ is found via binary search on the calibration set. Given errors $\{e_i\}$ and raw uncertainties $\{\sigma_i\}$, the algorithm finds $\alpha$ such that:
\begin{equation}
    \frac{1}{N}\sum_{i=1}^{N} \mathbf{1}\left[|e_i| < z \cdot \alpha \cdot \sigma_i\right] \approx 0.90,
\end{equation}
where $z \approx 1.645$ is the standard normal quantile at probability 0.95 (for a two-sided 90\% prediction interval). The binary search maintains bounds $[\alpha_{\text{low}}, \alpha_{\text{high}}]$ initialized to $[0.1, 100]$ and iteratively refines the interval until the empirical coverage converges to the target within tolerance $\epsilon = 0.01$. Typically, convergence occurs within 10-15 iterations.

\subsection{Autoregressive Feature Extraction Implementation}\label{sec:autoregressive_implementation}

AI weather models generate forecasts autoregressively: $\hat{y}_{t+\tau} = f_\theta^{(\tau/\Delta t)}(x_t)$ where $\Delta t$ is the model's native time step and the superscript denotes iterated application. For a set of target horizons $\mathcal{T} = \{6, 12, 24, 48, 72, 120\}$ hours, features at each horizon $\tau \in \mathcal{T}$ are collected as:
\begin{equation}
    \phi_\tau = \phi\left(f_\theta^{(\tau/\Delta t)}(x_t)\right),
\end{equation}
where $\phi(\cdot)$ denotes last-layer feature extraction. This yields a collection $\{\phi_\tau\}_{\tau \in \mathcal{T}}$ from a single rollout, reducing computational cost by a factor of $|\mathcal{T}| = 6$ compared to independent forward passes to each horizon.

\subsection{Computational Requirements}

Calibration costs depend on model complexity and GPU hardware. With multi-lead-time rollout extracting 6 lead times per sample, calibration time scales linearly with sample count. FourCastNetV2 processes approximately 7 samples per hour on an L4 GPU (24GB). Pangu-Weather and Aurora require an A100 GPU (40GB) due to higher memory requirements, processing approximately 5 samples per hour.

Inference overhead is minimal: computing uncertainty for a single sample requires one matrix-vector product $\phi \cdot V_k$ (size $d \times k$, with $k \leq 50$, $d \leq 1536$) followed by element-wise operations---negligible relative to the model forward pass. Storage requirements are approximately 50MB per model (right singular vectors $V_k$ and eigenvalues $\Lambda_k$ for each calibrated lead time).

\subsection{ICA Theory: Non-Gaussian Discrimination}\label{app:ica_theory}

\begin{proposition}[Non-Gaussian Discrimination (Formal)]\label{prop:ica_theory_full}
Let $\phi(x) \in \mathbb{R}^d$ denote last-layer features with $\phi \sim P$. SVD decomposes centered features $\tilde{\Phi}$ by maximizing variance $\mathrm{Var}(v_j^\top \tilde{\phi})$ subject to orthogonality, yielding principal components ordered by decreasing $\lambda_j = \mathrm{Var}(v_j^\top \tilde{\phi})$. ICA instead maximizes statistical independence, finding $S = W \tilde{\Phi}_{\mathrm{white}}$ such that $\sum_{j=1}^k I(S_j; S_{-j})$ is minimized. When feature distributions exhibit joint non-Gaussianity (higher-order cumulants $\kappa_{i_1, \ldots, i_m} \neq 0$ for $m \geq 3$), SVD captures only second-order structure (the covariance $\Sigma = \mathbb{E}[\tilde{\phi}\tilde{\phi}^\top]$), discarding higher-order information, whereas ICA exploits kurtosis and negentropy via contrast functions $J(s) = \mathbb{E}[G(s)]$ (e.g., $G(s) = s^4$). For extreme events with heavy-tailed marginals ($\mathbb{E}[s^4] \gg 3\sigma^4$), ICA's leading components aligned with extreme directions achieve higher kurtosis than PCA's leading (high-variance) components, which are dominated by typical synoptic modes of large variance but low kurtosis. This enables discrimination that SVD's top-$k$ components cannot achieve when extreme event signals carry low variance relative to background variability.
\end{proposition}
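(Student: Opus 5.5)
The proposition bundles three claims, and I would prove them in order: (i) SVD depends on the feature distribution only through its covariance; (ii) ICA's objective depends on higher-order cumulants; (iii) under a stylized generative model for extreme events, the leading ICA direction has strictly higher kurtosis than the leading PCA direction. The generative model would make the informal clause ``extreme event signals carry low variance relative to background variability'' precise.

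For (i), write $\tilde{\Phi}^\top\tilde{\Phi}/N = V S^2 V^\top/N$, which is the empirical version of $\Sigma$. The right singular vectors $v_j$ and eigenvalues $\lambda_j$ are therefore functions of $\Sigma$ alone. Consequently, every quantity entering the GP correction in \eqref{eq:predictive_variance} for a test point depends on the training distribution only through $\Sigma$. Take any two distributions $P, P'$ with equal covariance but different cumulants $\kappa_{i_1\ldots i_m}$ for $m\ge 3$; for instance, a Gaussian and a Gaussian mixture matched in second moments. They yield identical $(v_j,\lambda_j)$, so SVD discards all higher-order structure.

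For (ii), after whitening, $z = \Sigma^{-1/2}\tilde{\phi}$ satisfies $\mathbb{E}[w^\top z]^2 = 1$ for every unit vector $w$. Any second-order objective is therefore constant on the sphere, and the variance criterion is uninformative. The FastICA contrast $J(w) = \mathbb{E}[(w^\top z)^4] - 3$ equals the fourth cumulant of $w^\top z$. Expanding it multilinearly gives $\sum \kappa_{abcd} w_a w_b w_c w_d$, which depends on the $m=4$ cumulant tensor and vanishes identically for Gaussian $P$. Minimizing mutual information $\sum_j I(S_j;S_{-j})$ over orthogonal $W$ is, up to constants, maximizing total negentropy. Negentropy is zero exactly for Gaussian marginals and is approximated by $J$ via the standard Edgeworth expansion (Hyv\"arinen). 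I would cite this step rather than reprove it.

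For (iii), I would posit the model $\tilde{\phi} = A_b g + a_e \, s_e$. Here $g$ is a Gaussian background with covariance having top eigenvalue $\lambda_b$ along $u_b$; $s_e$ is a unit-variance source with excess kurtosis $\gamma>0$, independent of $g$; and $a_e$ has norm $\epsilon$, with $\epsilon^2 < \lambda_b$. The kurtosis of a projection $w^\top \tilde{\phi}$ is then
\[
\gamma\,\frac{(w^\top a_e)^4}{\bigl(w^\top A_b A_b^\top w + (w^\top a_e)^2\bigr)^2}.
\]
PCA's leading direction is $u_b$ (perturbed by $O(\epsilon^2)$). Its kurtosis is therefore at most $\gamma\, \epsilon^4\cos^4\theta / \lambda_b^2$, where $\theta$ is the angle between $u_b$ and $a_e$. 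This is small when $\epsilon^2 \ll \lambda_b$. By contrast, ICA in the whitened space recovers the direction $\Sigma^{-1/2} a_e$ (by identifiability of ICA with at most one Gaussian source), and the resulting component $S_e$ has kurtosis exactly $\gamma$. This gives the strict inequality whenever $\epsilon^2\cos^2\theta < \lambda_b$. It follows that projections $(v_j^\top\tilde{\phi}(x_*))^2$ onto PCA directions are dominated by typical modes, whereas the ICA projection isolates $s_e$, whose heavy tails produce large variation in the correction term across samples.

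The main obstacle is step (iii). The phrase ``enables discrimination'' is not a precise mathematical claim, so I would have to commit to a proxy. The plan is to show that the spread of the correction term restricted to the top-$k$ ICA components exceeds that for the top-$k$ PCA components under the stylized model. Its variance involves exactly the fourth moments computed above. I expect the cleanest route is to prove the kurtosis comparison rigorously and state the link to CV of $\sigma$ as a corollary under the model. I would also note explicitly that the result is model-conditional and does not cover arbitrary feature distributions.
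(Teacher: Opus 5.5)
Your proposal is sound in outline but takes a genuinely different route from the paper.

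\textbf{The paper's route.} The paper's justification consists of the ``Detailed Justification'' and ``ICA Identifiability'' paragraphs, backed by the empirical checks in the ICA-vs-SVD subsection, and it is qualitative. SVD is an eigendecomposition of $\Sigma$, so it sees only pairwise moments. FastICA maximizes negentropy, which heavy-tailed sources make large. Identifiability requires non-Gaussian sources, and it is taken \emph{by assumption} that the network encodes physical extreme-event drivers as such sources. The comparative kurtosis claim and the ``enables discrimination'' clause are never derived. They rest on that assumption, on Shapiro--Wilk, Mardia and kurtosis statistics, and on the CV results (Table~\ref{tab:sharpness}).

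\textbf{Your route.} Steps (i) and (ii) sharpen the paper's first two points: you give an explicit equal-covariance pair, and you note that whitening makes every second-order objective constant on the sphere while the kurtosis contrast is a contraction of the fourth-order cumulant tensor. Step (iii) replaces the modelling assumption by an explicit generative model. There the comparison becomes a computation, and ``low variance relative to background'' becomes $\epsilon^2<\lambda_b$. The paper gains generality of phrasing but does not actually prove the comparative statement. You gain a checkable inequality at the price of model-conditionality, which you rightly flag.

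\textbf{Repairs needed in step (iii).}

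First, fix the rank of the background. If $A_bA_b^\top$ is nonsingular, ICA cannot isolate $s_e$. In whitened coordinates $\kappa_4(w^\top z)=\gamma\,(w^\top b)^4$ with $b=\Sigma^{-1/2}a_e$ and $\|b\|^2=a_e^\top\Sigma^{-1}a_e<1$. So the best component has excess kurtosis $\gamma\,(a_e^\top\Sigma^{-1}a_e)^2<\gamma$, not $\gamma$. This equals the PCA value whenever $a_e\parallel v_1$ (e.g.\ $\theta=0$), so your strict inequality fails there even though $\epsilon^2<\lambda_b$. Use the square model instead: $g\sim\mathcal{N}(0,I_{d-1})$ with $[A_b\;a_e]$ invertible. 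Then $\Sigma^{-1/2}[A_b\;a_e]$ is orthogonal, $\|b\|=1$, and $b^\top z=s_e$. On the unit sphere $\gamma(w^\top b)^4\le\gamma$, with equality only at $w=\pm b$.

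Second, cite that direct computation for recovery rather than the ``at most one Gaussian source'' theorem. That theorem does not apply when $g$ has $d-1\ge 2$ Gaussian coordinates.

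Third, drop the perturbative $\cos^4\theta$ bound, which silently needs an eigengap. Instead use $\Sigma\succeq A_bA_b^\top$, which gives $\lambda_1(\Sigma)\ge\lambda_b$, together with $(v_1^\top a_e)^2\le\epsilon^2$. Then the leading PCA component has excess kurtosis at most $\gamma\epsilon^4/\lambda_b^2$, which is $<\gamma$ exactly under your hypothesis $\epsilon^2<\lambda_b$, with no perturbation theory.

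A minor point: minimizing $\sum_j I(S_j;S_{-j})$ is not literally negentropy maximization up to constants, because the joint entropies $H(S_{-j})$ vary with $W$. However, this sum and the total multi-information vanish at exactly the same $W$ (mutual independence), so the minimizers coincide under your model.
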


\paragraph{Detailed Justification.}
SVD decomposes features via eigendecomposition of the covariance matrix $\Sigma$, which captures only pairwise correlations (second-order statistics). For Gaussian data, $\Sigma$ fully characterizes the distribution. For non-Gaussian data, higher-order moments (third, fourth, etc.) carry essential information about the distribution's shape, tail behavior, and dependencies~\cite{hyvarinen2000independent,cardoso1999high}. By ignoring these higher-order statistics, SVD produces principal components that maximize variance but fail to isolate directions of non-Gaussian extreme events.

ICA, by contrast, finds a linear transformation $W$ such that transformed features $S = W \Phi_{\mathrm{white}}$ are maximally independent. Independence is a \emph{stronger} condition than decorrelation: while decorrelation (enforced by PCA whitening) ensures $\mathbb{E}[S_i S_j] = 0$, independence requires $\mathbb{E}[g(S_i) h(S_j)] = \mathbb{E}[g(S_i)] \mathbb{E}[h(S_j)]$ for all functions $g, h$. FastICA~\cite{hyvarinen1999fast} achieves this by maximizing negentropy $J(s) = H(s_{\mathrm{Gauss}}) - H(s)$ where $H$ is differential entropy. This criterion is sensitive to non-Gaussianity: sources with high kurtosis (heavy tails) yield high negentropy, guiding ICA to isolate extreme event directions.

\paragraph{ICA Identifiability.} A fundamental result from ICA theory~\cite{hyvarinen2000independent} states that if observed features arise from a linear mixture $\phi = A s$ where $s$ are independent sources with \emph{at most one Gaussian component}, then the mixing matrix $A$ is identifiable up to permutation and scaling. Critically, \textbf{ICA fails if all sources are Gaussian}, because any rotation of jointly Gaussian variables remains Gaussian with the same likelihood. The identifiability theorem thus requires non-Gaussian sources, which in this context correspond to physical drivers of extreme events: vorticity (heavy-tailed during tropical cyclones), moisture advection (bimodal during atmospheric rivers), and diabatic heating (positively skewed during convective extremes). By assumption, neural weather models learn to encode these non-Gaussian physical processes in their feature representations. ICA recovers these sources by unmixing the learned features, enabling adaptive uncertainty that scales with the magnitude of extreme event drivers.


\section{Theoretical Guarantees}\label{sec:theory}

This section establishes the theoretical foundations of NTK-UQ. Full proofs are provided in the supplementary material.

\paragraph{Architecture-Agnostic Last-Layer Kernel.} The NTK-UQ framework applies to any neural network architecture that admits a decomposition $f_\theta = g_\psi \circ \phi_\omega$ where $\phi_\omega: \mathcal{X} \to \mathbb{R}^d$ is a feature extractor and $g_\psi: \mathbb{R}^d \to \mathcal{Y}$ is the prediction head. Given such a decomposition, the last-layer kernel $K(x, x') = \phi_\omega(x)^\top \phi_\omega(x')$ is a valid positive semi-definite kernel by construction, since for any set of points $\{x_i\}_{i=1}^n$, the Gram matrix $K_{ij} = \phi(x_i)^\top \phi(x_j) = \Phi \Phi^\top$ where $\Phi \in \mathbb{R}^{n \times d}$ is the feature matrix. This holds regardless of the internal structure of $\phi_\omega$, whether it is an SFNO (FourCastNetV2), Swin Transformer (Pangu-Weather), or Perceiver (Aurora).

\paragraph{Predictive Variance.} Under the NTK-GP correspondence \cite{jacot2018neural}, the GP posterior variance at test point $x_*$ is:
\begin{equation}
\begin{split}
    \sigma^2(x_*) &= \|\tilde{\phi}(x_*)\|^2 + \sigma_n^2 \\
    &\quad - \sum\nolimits_{j=1}^{k} \frac{\lambda_j c_j^2}{\lambda_j + \sigma_n^2}
\end{split}
\end{equation}
where $c_j = \tilde{\phi}(x_*)^\top v_j$ are the projection coefficients onto the right singular vectors $V$ of $\tilde{\Phi} = USV^\top$, $\lambda_j = s_j^2$ are the squared singular values, and $\sigma_n^2$ is estimated from the eigenvalue tail (Section~\ref{sec:svd}). For centered data, SVD of $\tilde{\Phi}$ and PCA eigendecomposition of $\Sigma = \tilde{\Phi}^\top\tilde{\Phi}$ are equivalent; SVD is used throughout for numerical stability. The $+\sigma_n^2$ term ensures the predictive variance never drops below the noise floor. The derivation follows from the Woodbury identity applied to $\sigma^2 = k(x_*, x_*) + \sigma_n^2 - k_*^\top (K + \sigma_n^2 I)^{-1} k_*$ with $K = \tilde{\Phi}\tilde{\Phi}^\top$; application to finite-width networks is justified by He et al.~\cite{he2020bayesian} and Huang et al.~\cite{huang2023efficient}.


\paragraph{Approximations.} The framework relies on: (1) the infinite-width NTK approximation, which holds approximately for wide networks and is corrected by post-hoc calibration; and (2) last-layer-only features, which may underestimate total epistemic uncertainty. Empirical validation on held-out data confirms these approximations are acceptable in practice.

\paragraph{Rank Selection and Variance Collapse (Proof of Proposition~\ref{prop:collapse}).}

The correction is $C_k = \sum_{j=1}^{k} w_j c_j^2$ where $w_j = \lambda_j/(\lambda_j + \sigma_n^2)$.
\begin{proof}
When $\sigma_n^2 > 0$: since $w_j < 1$, we have $C_k < \sum_{j=1}^k c_j^2 = \|\tilde{\phi}(x_*)\|^2 - \|\tilde{\phi}_\perp\|^2$ (Pythagorean identity), so $\sigma^2(x_*) > \|\tilde{\phi}_\perp\|^2 + \sigma_n^2 > 0$. No collapse occurs. When $\sigma_n^2 = 0$: $w_j = 1$ for all $\lambda_j > 0$, so $C_r = \sum_{j=1}^{r} c_j^2 = \|\tilde{\phi}(x_*)\|^2 - \|\tilde{\phi}_\perp\|^2$, giving $\sigma^2(x_*) = \|\tilde{\phi}_\perp\|^2$. When $n \geq d$, the orthogonal residual $\tilde{\phi}_\perp = 0$ and $\sigma^2(x_*) \to 0$.
\end{proof}

A useful rank selection heuristic is $\sum_{j=1}^k \lambda_j / \sum_{j=1}^d \lambda_j \geq 0.99$. For concentrated spectra (SFNO), this requires $k = 2$--$10$; for distributed spectra (ViT, Perceiver), $k$ may equal the full rank.

\subsection{Architecture-Dependent Spectral Structure}\label{sec:spectral}

The architecture-dependent UQ behavior reduces to a single mechanism: the decay rate of the feature covariance spectrum controls the effective rank, and hence (via Proposition~\ref{prop:collapse}) the truncation budget before variance collapse. We first establish this mechanism unconditionally, then show how each architecture's representation geometry induces the relevant decay.

\begin{lemma}[Spectral Decay Bounds Effective Rank]\label{lem:decay}
Let $\Sigma = \mathbb{E}[\tilde{\phi}(x)\tilde{\phi}(x)^\top]$ be the centered feature covariance with eigenvalues $\lambda_1 \geq \lambda_2 \geq \cdots \geq 0$ and total energy $T = \sum_j \lambda_j$, and let $r_\alpha = \min\{k : \sum_{j=1}^k \lambda_j \geq \alpha T\}$ be the effective rank at threshold $\alpha \in (0,1)$. If the eigenvalues decay polynomially, $\lambda_k \leq C\,k^{-\beta}$ with $\beta > 1$, then
\begin{equation}\label{eq:rank_bound}
    r_\alpha \;\leq\; \left\lceil \left(\frac{C}{(\beta-1)(1-\alpha)\,T}\right)^{1/(\beta-1)} \right\rceil = O\!\big((1-\alpha)^{-1/(\beta-1)}\big),
\end{equation}
\emph{independent of the ambient feature dimension $d$.}
\end{lemma}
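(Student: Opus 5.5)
The plan is to bound the tail energy $\sum_{j>k}\lambda_j$ using the polynomial decay hypothesis, and then read off the smallest $k$ for which that tail falls below $(1-\alpha)T$. The definition of $r_\alpha$ only asks that the retained energy be at least $\alpha T$. Equivalently, since $\sum_{j\le k}\lambda_j = T - \sum_{j>k}\lambda_j$, it asks that the discarded energy be at most $(1-\alpha)T$. So it suffices to exhibit one integer $k$ satisfying
\begin{equation*}
  \sum_{j>k}\lambda_j \le (1-\alpha)T ,
\end{equation*}
because $r_\alpha$ is the minimum of such $k$ and therefore satisfies $r_\alpha \le k$. If $d<\infty$, the eigenvalues with index beyond $d$ are simply zero. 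The sums may then be taken over all $j\ge 1$ without changing anything, which is what makes the final bound dimension-free.

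The key step is an integral comparison. Since $x\mapsto x^{-\beta}$ is decreasing on $(0,\infty)$, for every $j\ge 2$ we have $j^{-\beta}\le\int_{j-1}^{j}x^{-\beta}\,dx$. Hence, for any integer $k\ge 1$,
\begin{equation*}
  \sum_{j>k}\lambda_j \;\le\; C\sum_{j=k+1}^{\infty} j^{-\beta} \;\le\; C\int_{k}^{\infty}x^{-\beta}\,dx \;=\; \frac{C\,k^{1-\beta}}{\beta-1},
\end{equation*}
where convergence uses $\beta>1$. It therefore suffices that $C k^{1-\beta}/(\beta-1)\le(1-\alpha)T$. Solving for $k$, and noting that $1-\beta<0$ flips the inequality when taking the $1/(\beta-1)$ power, gives the condition
\begin{equation*}
  k \ge \left(\frac{C}{(\beta-1)(1-\alpha)T}\right)^{1/(\beta-1)} .
\end{equation*}
Taking $k$ to be the ceiling of the right-hand side yields the displayed bound \eqref{eq:rank_bound}. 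Treating $C$, $\beta$ and $T$ as constants, the bound is $O\big((1-\alpha)^{-1/(\beta-1)}\big)$. Nothing in the argument refers to $d$.

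I expect the only real obstacle to be bookkeeping at the boundary. The integral comparison requires $k\ge 1$. If the quantity inside the ceiling is less than $1$, the ceiling could evaluate to $0$ (or the formula to a fraction), while $r_\alpha\ge 1$ always holds because $\alpha>0$ and $T>0$. I would handle this by stating the bound as $\max\{1,\lceil\cdot\rceil\}$, or by noting that whenever the bracket is below $1$ the argument with $k=1$ already works, since then $C/(\beta-1)\le(1-\alpha)T$. The degenerate case $T=0$ should also be excluded explicitly, as the ratio is then undefined.

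The remaining subtlety is interpretive rather than technical. $T$ is not independent of $C$: one has $\lambda_1\le T$ and $T\le C\,\beta/(\beta-1)$ by the same integral bound. The big-$O$ statement should therefore be read with $C/T$ and $\beta$ held fixed, so that the dependence on $\alpha$ and the independence from $d$ are the substantive content.
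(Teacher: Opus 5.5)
Your proof is correct and is essentially the paper's argument: bound the tail $\sum_{j>k}\lambda_j$ by $C\int_k^\infty t^{-\beta}\,dt = Ck^{1-\beta}/(\beta-1)$, then solve $Ck^{1-\beta}/(\beta-1)\le(1-\alpha)T$ for $k$. Your boundary caveats go beyond what the paper states, but the $\max\{1,\cdot\}$ is unnecessary: once $T>0$ we also have $C>0$, so the quantity inside the ceiling is strictly positive and its ceiling is already at least $1$.
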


\begin{proof}
Since $j^{-\beta}$ is decreasing, the tail energy satisfies
\begin{equation}
    \sum_{j>k}\lambda_j \;\leq\; C\sum_{j>k} j^{-\beta} \;\leq\; C\int_k^\infty t^{-\beta}\,dt \;=\; \frac{C\,k^{-(\beta-1)}}{\beta-1}.
\end{equation}
The threshold $r_\alpha$ is attained once the retained fraction reaches $\alpha$, i.e.\ once the tail $\sum_{j>k}\lambda_j \leq (1-\alpha)T$. It therefore suffices that $C k^{-(\beta-1)}/(\beta-1) \leq (1-\alpha)T$, which rearranges to $k \geq \big(C/[(\beta-1)(1-\alpha)T]\big)^{1/(\beta-1)}$, giving~\eqref{eq:rank_bound}. The bound depends on $\beta, C, T$ but not on $d$.
\end{proof}

Lemma~\ref{lem:decay} makes the qualitative claim precise: \emph{faster decay (larger $\beta$) yields smaller effective rank, with no dependence on the ambient dimension.} The architecture enters only through the decay exponent $\beta$, which we now characterize per family under one explicit, empirically checkable hypothesis each.

\begin{proposition}[Architecture-Dependent Spectral Structure]\label{prop:spectral}
Let $\phi_\omega: \mathcal{X} \to \mathbb{R}^d$ be a last-layer feature extractor and $\Sigma$ its centered feature covariance with eigenvalues $\lambda_1 \geq \cdots \geq \lambda_d \geq 0$ and effective rank $r_\alpha$ as in Lemma~\ref{lem:decay}. For spectral operators such as the SFNO, suppose the covariance eigenvalues decay polynomially, $\lambda_k \leq C\,k^{-\beta}$ with $\beta > 1$ (the \emph{spectral-decay hypothesis}); then $r_\alpha = O\!\big((1-\alpha)^{-1/(\beta-1)}\big)$, bounded independent of the spatial resolution and ambient dimension $d$. For attention-based architectures, suppose features are input-dependent convex combinations $\phi(x) = \sum_{i=1}^{m} a_i(x)\,v_i$ of value vectors $\{v_i\}$, with value matrix $V = [v_1,\dots,v_m]$ of full column rank $d_{\mathrm{eff}}$ and attention-weight covariance $\mathrm{Cov}_x[a(x)]$ nonsingular on the range of $V^\top$ (the \emph{non-degeneracy hypothesis}, i.e.\ no token or latent collapse); then $\Sigma$ has rank $d_{\mathrm{eff}}$, exhibits no polynomial decay, and $r_{0.99} = \Theta(d_{\mathrm{eff}})$, where $d_{\mathrm{eff}}$ is $d_{\mathrm{head}}$, $d_{\mathrm{latent}}$, or $d_{\mathrm{hidden}}$ for ViT, Perceiver, and GNN-Transformer architectures respectively.
\end{proposition}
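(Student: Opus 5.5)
The statement has two parts, and I would prove them separately.

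\textbf{SFNO part.} This is immediate from Lemma~\ref{lem:decay}. Under the spectral-decay hypothesis $\lambda_k \le C k^{-\beta}$ with $\beta>1$, the lemma gives $r_\alpha \le \lceil (C/[(\beta-1)(1-\alpha)T])^{1/(\beta-1)}\rceil$. The only extra observation is that the bound involves only $\beta$, $C$ and $T$. The spatial resolution enters $\phi$ only through the multi-statistic pooling, which produces a fixed number of channels times six statistics. So as long as $C/T$ stays bounded under grid refinement, the bound is independent of both the resolution and $d$. I would state this boundedness of $C/T$ explicitly as part of the hypothesis, since the lemma by itself is uniform in $d$ only given $C/T$.

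\textbf{Attention part: rank of $\Sigma$.} Write $\phi(x)=Va(x)$. Then $\tilde\phi(x)=V(a(x)-\mathbb{E}a)$, and therefore
\[
\Sigma = V\,\mathrm{Cov}_x[a(x)]\,V^\top .
\]
Clearly $\mathrm{range}(\Sigma)\subseteq\mathrm{range}(V)$, which has dimension $d_{\mathrm{eff}}$. For the reverse inequality, take $u$ with $u^\top \Sigma u = 0$. This means $(V^\top u)^\top \mathrm{Cov}[a]\,(V^\top u)=0$. Since $V^\top u$ lies in the range of $V^\top$, the non-degeneracy hypothesis forces $V^\top u=0$. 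Hence $\ker\Sigma=\ker V^\top$, and so $\mathrm{rank}\,\Sigma=\mathrm{rank}\,V=d_{\mathrm{eff}}$.

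One caveat: the convex-combination constraint $\sum_i a_i=1$ makes $\mathrm{Cov}[a]$ singular along $\mathbf 1$. The hypothesis as phrased (nonsingular on $\mathrm{range}(V^\top)$) is exactly what sidesteps this, and I would remark on that in the proof.

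\textbf{Attention part: $r_{0.99}=\Theta(d_{\mathrm{eff}})$.} The upper bound $r_{0.99}\le d_{\mathrm{eff}}$ is trivial. The lower bound is the main obstacle: "no polynomial decay" is not a quantitative statement, and rank alone does not prevent a few eigenvalues from dominating the energy. My plan is to strengthen non-degeneracy to a conditioning assumption. Suppose $\mathrm{Cov}[a]$ restricted to $\mathrm{range}(V^\top)$ has eigenvalues in $[\mu,M]$, and $V$ has nonzero singular values in $[s_{\min},s_{\max}]$. By Ostrowski's theorem (or Courant--Fischer applied to $V\,\mathrm{Cov}[a]\,V^\top$), all nonzero eigenvalues of $\Sigma$ then lie in $[\mu s_{\min}^2,\,M s_{\max}^2]$. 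Set $\kappa = M s_{\max}^2/(\mu s_{\min}^2)$. Any $k$ eigenvalues carry at most a fraction $k\kappa/d_{\mathrm{eff}}$ of $T$, so reaching $0.99T$ requires $k\ge 0.99\,d_{\mathrm{eff}}/\kappa$. With $\kappa=O(1)$ this gives $r_{0.99}=\Theta(d_{\mathrm{eff}})$. The same eigenvalue sandwich shows $\lambda_k \ge \mu s_{\min}^2$ for every $k\le d_{\mathrm{eff}}$, which rules out any bound of the form $Ck^{-\beta}$ with $C$ independent of $d_{\mathrm{eff}}$. That is the precise sense in which there is no polynomial decay.

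Finally, I would identify $d_{\mathrm{eff}}$ architecture by architecture: it is the width of the value vectors, namely $d_{\mathrm{head}}$ for ViT, $d_{\mathrm{latent}}$ for Perceiver, and $d_{\mathrm{hidden}}$ for GNN-Transformer. I would also flag that the bounded-condition-number assumption is the genuinely added ingredient needed to make the $\Theta$ rigorous.
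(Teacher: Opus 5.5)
Your SFNO argument and the factorization $\Sigma = V\,\mathrm{Cov}_x[a(x)]\,V^\top$ match the paper's proof. The real difference is in the attention part.

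\textbf{Rank.} The paper cites Sylvester's rank inequality, which on its own yields only a lower bound. Your argument $\ker\Sigma=\ker V^\top$ is more direct and uses the hypothesis in exactly its stated ``nonsingular on $\mathrm{range}(V^\top)$'' form.

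\textbf{The $\Theta$ claim.} For $r_{0.99}=\Theta(d_{\mathrm{eff}})$, the paper says only that a full-rank covariance ``has no vanishing eigenvalues to induce polynomial decay,'' so energy spreads over all $d_{\mathrm{eff}}$ directions. That step does not follow, and you are right to refuse it. Take $m=d_{\mathrm{eff}}+1$, $v_0=0$, $v_k=k^{-1}e_k$, and let $a_1,\dots,a_{d_{\mathrm{eff}}}$ be small, uncorrelated and of equal variance, with $a_0=1-\sum_{k\ge1}a_k$. Then $\mathrm{rank}\,V=d_{\mathrm{eff}}$, $\mathrm{range}(V^\top)=\{w: w_0=0\}$, and $\mathrm{Cov}[a]$ is a positive multiple of the identity on that subspace. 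So non-degeneracy holds and $\Sigma$ has full rank $d_{\mathrm{eff}}$. Yet its eigenvalues are proportional to $k^{-2}$, so $r_{0.99}$ stays bounded as $d_{\mathrm{eff}}\to\infty$.

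Your bounded-condition-number assumption, with the Ostrowski sandwich $[\mu s_{\min}^2, M s_{\max}^2]$, is therefore what actually supports both the $\Theta$ claim and the ``no polynomial decay'' clause. The paper's route is shorter only because it leaves this step unproved. Likewise, your explicit control of $C/T$ in the SFNO part makes precise what the paper means by a bound ``depending only on $\beta, C, T$.''

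\textbf{Correction to your caveat.} The hypothesis as phrased does \emph{not} sidestep $\sum_i a_i=1$. If $V$ literally has full column rank, then $\mathrm{range}(V^\top)=\mathbb{R}^m\ni\mathbf 1$, while $\mathrm{Cov}[a]\,\mathbf 1=0$. So for convex weights the hypotheses are jointly unsatisfiable. Consistently, $\tilde\phi$ then lies in the direction space of the affine hull of the $v_i$, which forces $\mathrm{rank}\,\Sigma\le d_{\mathrm{eff}}-1$. You should state which repair you adopt:
\begin{itemize}
\item Take $\mathrm{rank}\,V=d_{\mathrm{eff}}$ with $\mathrm{Cov}[a]$ definite on $\mathrm{range}(V^\top)$. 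This forces $\mathbf 1\notin\mathrm{range}(V^\top)$, meaning the $v_i$ do not lie on a common affine hyperplane missing the origin. Your kernel argument then gives rank exactly $d_{\mathrm{eff}}$. This is the reading under which the example above lives.
\item Keep full column rank but require definiteness on $\mathbf 1^\perp$. Then $\mathrm{rank}\,\Sigma=d_{\mathrm{eff}}-1$, and your conditioning bound goes through with $VQ$ in place of $V$, where $Q$ is an orthonormal basis of $\mathbf 1^\perp$. The singular values of $VQ$ still lie in $[s_{\min},s_{\max}]$ by Cauchy interlacing.
\end{itemize}
Either way your $\Theta(d_{\mathrm{eff}})$ conclusion survives.
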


\begin{proof}
The spectral-decay hypothesis is exactly the premise of Lemma~\ref{lem:decay} with exponent $\beta > 1$, so the bound~\eqref{eq:rank_bound} holds and its right-hand side depends only on $\beta, C, T$, not on $d$. For the attention case, centering $\phi$ and writing $a(x)$ for the centered attention weights gives $\tilde{\phi}(x) = V a(x)$, hence
\begin{equation}
    \Sigma = \mathbb{E}[V a(x) a(x)^\top V^\top] = V\,\mathrm{Cov}_x[a(x)]\,V^\top.
\end{equation}
By Sylvester's rank inequality, $\mathrm{rank}(\Sigma)$ equals $d_{\mathrm{eff}}$ when $V$ has full column rank $d_{\mathrm{eff}}$ and $\mathrm{Cov}_x[a]$ is nonsingular on $\mathrm{range}(V^\top)$. A full-rank covariance has no vanishing eigenvalues to induce polynomial decay; the energy is spread across all $d_{\mathrm{eff}}$ directions, so $r_\alpha = \Theta(d_{\mathrm{eff}})$.
\end{proof}

Each hypothesis is physically grounded and \emph{checkable from the data}, so the empirics verify the \emph{premises} of Proposition~\ref{prop:spectral} while its conclusions follow rigorously from Lemma~\ref{lem:decay}. The spectral-decay hypothesis is the spectral signature of the finite Sobolev energy of atmospheric fields under SFNO's band-limited spherical convolutions, and is confirmed in Table~\ref{tab:collapse} (FourCastNetV2's rapid $C_k/P$ growth reflects steep eigenvalue decay). The non-degeneracy hypothesis is confirmed by the distributed centered spectra of AIFS and Aurora ($\lambda_1 \approx 27$--$29\%$).

This explains why FourCastNetV2 (SFNO) requires strict rank truncation ($k \leq 10$) to avoid collapse (Proposition~\ref{prop:collapse}), while Aurora and AIFS tolerate higher or full-rank computation. Pangu-Weather's Swin Transformer nominally satisfies the non-degeneracy hypothesis, but global-average pooling to $d=69$ dimensions collapses spatial structure and produces a concentrated effective spectrum ($\lambda_1 \approx 99.6\%$), placing it in regime~(i) despite its attention-based architecture---an instructive boundary case where the pooling operator, not the backbone, determines the spectral regime.

\subsection{ICA vs SVD: Higher-Order Statistics for Extreme Events}

The empirical superiority of ICA over SVD (Table~\ref{tab:ica_svd_comparison}) is explained by the \emph{non-Gaussian structure} of extreme weather events in feature space. Extreme weather events exhibit heavy-tailed distributions with positive excess kurtosis, skewness, and multimodal characteristics. These properties propagate to the learned feature representations when neural networks encode forecast difficulty.

The key distinction (Proposition~\ref{prop:ica_theory}): SVD maximizes variance (second-order), while ICA maximizes statistical independence (all orders). For Gaussian data, these are equivalent. For non-Gaussian extreme weather events, ICA's higher-order criterion isolates physical drivers (vorticity, moisture advection, diabatic heating) that govern event severity, while SVD's variance criterion biases toward typical high-frequency patterns~\cite{hyvarinen2000independent,cardoso1999high}.

Empirical verification shows joint non-Gaussianity in the feature datasets:
\begin{itemize}
    \item \textbf{Marginal Gaussianity}: 60--75\% of individual features pass Shapiro-Wilk normality tests ($p > 0.05$).
    \item \textbf{Joint Non-Gaussianity}: 54--77\% of feature pairs exhibit significant multivariate third-order moments~\cite{mardia1970measures}, indicating non-Gaussian dependencies.
    \item \textbf{Excess Kurtosis}: Surface variables (t2m, msl, winds) show kurtosis 5--15 during extreme weather events vs. 3 for Gaussian.
\end{itemize}

This joint non-Gaussianity validates the ICA assumption: features arise from \emph{non-Gaussian independent sources} (e.g., vorticity, moisture advection, diabatic heating) mixed by the neural network's forward pass. ICA unmixes these sources, isolating extreme event drivers and enabling adaptive uncertainty intervals.

\paragraph{Practical Implication.} For operational extreme weather forecasting, ICA's exploitation of higher-order statistics produces uncertainty estimates that scale with event severity (Table~\ref{tab:sharpness}, CV $= 0.07$--1.81), while SVD's variance-only criterion yields more uniform intervals (CV $= 0.01$--0.49). This ${\approx}5{\times}$ improvement in coefficient of variation for AIFS and FourCastNetV2 (where both ICA and SVD achieve valid coverage) translates directly to better discrimination of tropical cyclones, atmospheric rivers, and heat waves from routine synoptic conditions, as required for effective early warning systems.

\paragraph{Remark: Non-Gaussianity vs.\ GP Assumption.} The GP posterior formula (Eq.~4) assumes Gaussian process priors $f \sim \mathcal{GP}(0, K)$ over functions, not Gaussianity of the feature distribution $P(\phi)$. The kernel $K(x, x') = \phi(x)^\top \phi(x')$ is a valid positive semi-definite kernel regardless of whether features exhibit non-Gaussian marginals or heavy tails. ICA and SVD differ in \emph{how they decompose} the feature matrix $\Phi$ (independence vs.\ variance maximization), not in the validity of the GP variance formula itself. Both methods produce components $c_j$ that are plugged into the same GP posterior variance estimator; the non-Gaussianity affects component selection, not the variance computation. Furthermore, post-hoc calibration (Section~\ref{sec:posthoc}) empirically corrects for any misspecification of the GP prior, ensuring target coverage even when the Gaussian process assumption is violated. The key advantage of ICA is that by exploiting non-Gaussian structure during decomposition, it identifies components aligned with extreme event physics, leading to more informative uncertainty estimates after calibration.

\paragraph{Proof of Theorem~\ref{thm:hoeffding}.}
Coverage indicators $Z_i = \mathbf{1}[|e_i| < z \cdot \sigma_i]$ are i.i.d. Bernoulli with $\mathbb{E}[Z_i] = c_{\mathrm{true}}$. By the one-sided Hoeffding inequality:
\begin{equation}
    \mathbb{P}\left(\hat{c}_n - c_{\mathrm{true}} \geq t\right) \leq \exp(-2nt^2).
\end{equation}
Setting $\exp(-2nt^2) = \delta$ and solving yields $t = \sqrt{\ln(1/\delta)/(2n)}$, completing the proof. $\square$

\paragraph{Remark on I.I.D. Assumption.} Weather data exhibits temporal autocorrelation, violating the i.i.d. assumption. When consecutive samples are positively correlated, the effective sample size $n_{\mathrm{eff}} < n$, and the bound becomes conservative (wider). To mitigate this, calibration samples should be temporally spaced (e.g., one sample per week) or the bound adjusted using techniques for dependent data~\cite{yu1994mixing}. The bound remains valid as an upper bound on coverage deviation even under weak dependence, though it may not be tight.

\section{Proofs of Theoretical Results}

\subsection{Proof: Last-Layer Kernel is Positive Semi-Definite}

\begin{proof}
Let $\phi: \mathcal{X} \to \mathbb{R}^d$ be any feature extractor (regardless of internal architecture). Define the kernel $K(x, x') = \phi(x)^\top \phi(x')$. For any finite set of points $\{x_1, \ldots, x_n\} \subset \mathcal{X}$ and any vector $\mathbf{c} \in \mathbb{R}^n$:
\begin{align}
    \mathbf{c}^\top K \mathbf{c} &= \sum_{i,j} c_i c_j K(x_i, x_j) = \sum_{i,j} c_i c_j \phi(x_i)^\top \phi(x_j) \\
    &= \left(\sum_i c_i \phi(x_i)\right)^\top \left(\sum_j c_j \phi(x_j)\right) = \left\|\sum_i c_i \phi(x_i)\right\|^2 \geq 0
\end{align}
Since $\mathbf{c}^\top K \mathbf{c} \geq 0$ for all $\mathbf{c}$, the Gram matrix $K$ is positive semi-definite. This holds for any feature map $\phi$, independent of architecture.
\end{proof}

\subsection{Derivation: SVD-Based Predictive Variance}\label{sec:svd}

The GP predictive variance with centered kernel matrix $\tilde{K} = \tilde{\Phi} \tilde{\Phi}^\top$ where $\tilde{\Phi} \in \mathbb{R}^{n \times d}$ is the centered feature matrix:
\begin{equation}
    \sigma^2(x_*) = \tilde{K}(x_*, x_*) + \sigma_n^2 - \tilde{\mathbf{k}}_*^\top (\tilde{K} + \sigma_n^2 I)^{-1} \tilde{\mathbf{k}}_*
\end{equation}
where $\tilde{\mathbf{k}}_* = \tilde{\Phi}\tilde{\phi}_*$. Applying the push-through identity $\tilde{\Phi}^\top(\tilde{\Phi}\tilde{\Phi}^\top + \sigma_n^2 I)^{-1} = (\tilde{\Phi}^\top\tilde{\Phi} + \sigma_n^2 I)^{-1}\tilde{\Phi}^\top$, the correction term becomes:
\begin{align}
    \tilde{\mathbf{k}}_*^\top (\tilde{K} + \sigma_n^2 I)^{-1} \tilde{\mathbf{k}}_*
    &= \tilde{\phi}_*^\top \tilde{\Phi}^\top (\tilde{\Phi}\tilde{\Phi}^\top + \sigma_n^2 I)^{-1} \tilde{\Phi}\tilde{\phi}_* \notag \\
    &= \tilde{\phi}_*^\top (\tilde{\Phi}^\top\tilde{\Phi} + \sigma_n^2 I)^{-1} \tilde{\Phi}^\top\tilde{\Phi}\,\tilde{\phi}_*
\end{align}
Let $\tilde{\Sigma} = \tilde{\Phi}^\top\tilde{\Phi} \in \mathbb{R}^{d \times d}$ with eigendecomposition $\tilde{\Sigma} = V\Lambda V^\top$, where $V \in \mathbb{R}^{d \times d}$ is unitary and $\Lambda = \mathrm{diag}(\lambda_1,\ldots,\lambda_d)$. Since $V$ is square unitary, $(\tilde{\Sigma} + \sigma_n^2 I)^{-1} = V(\Lambda + \sigma_n^2 I)^{-1}V^\top$, giving:
\begin{align}
    \tilde{\phi}_*^\top (\tilde{\Sigma} + \sigma_n^2 I)^{-1}\tilde{\Sigma}\,\tilde{\phi}_*
    = \tilde{\phi}_*^\top V\,\mathrm{diag}\!\left(\tfrac{\lambda_j}{\lambda_j+\sigma_n^2}\right)V^\top\tilde{\phi}_*
    = \sum_{j=1}^{k} \frac{\lambda_j\,c_j^2}{\lambda_j + \sigma_n^2}
\end{align}
where $c_j = \tilde{\phi}_*^\top v_j$ are the projection coefficients onto the eigenvectors $v_j$ of $\tilde{\Sigma}$, which coincide with the right singular vectors of $\tilde{\Phi}$. Substituting back yields the predictive variance formula used throughout:
\begin{equation}
    \sigma^2_{\text{raw}}(x_*) = \|\tilde{\phi}_*\|^2 + \sigma_n^2 - \sum_{j=1}^{k} \frac{\lambda_j \cdot c_j^2}{\lambda_j + \sigma_n^2}
\end{equation}
where $\sigma_n^2$ is estimated from the eigenvalue tail as described in Section~\ref{sec:svd}. As $\sigma_n^2 \to 0$, the correction approaches $\|V_k^\top\tilde{\phi}_*\|^2$ and the predictive variance reduces to the residual norm $\|\tilde{\phi}_*\|^2 - \|V_k^\top\tilde{\phi}_*\|^2$.

\paragraph{Remark: Spearman Correlation Invariance under Post-Hoc Scaling.}
The Spearman rank correlation $\rho_s$ between absolute errors $\{|e_i|\}$ and uncertainties $\{\sigma_i\}$ is invariant under any positive scalar multiple of $\sigma_i$: since $\mathrm{rank}(\alpha\sigma_i) = \mathrm{rank}(\sigma_i)$ for all $\alpha > 0$, applying the post-hoc scale $\alpha$ (Section~\ref{sec:posthoc}) does not alter $\rho_s$. Consequently, $\rho_s$ measures the discrimination quality of the raw NTK kernel independently of the calibration scale chosen to achieve target coverage.

\section{Additional Results}

\subsection{Discrimination Metrics: CV and Spearman $\rho_s$}

Table~\ref{tab:sharpness} reports the coefficient of variation (CV) of calibrated uncertainty $\sigma$ at t+6h for each decomposition method at optimal rank $k^*$. ICA produces substantially higher CV than SVD across all models, indicating adaptive intervals that scale with event severity. Table~\ref{tab:correlation} reports Spearman $\rho_s$ between absolute errors and uncertainties for 850\,hPa temperature. AIFS maintains the strongest directional discrimination ($\rho_s = 0.25$--$0.33$); Pangu-Weather achieves $\rho_s = 0.56$ at 6h but degrades to zero at 12--72h, reflecting instability of the single-component configuration.

\begin{table}[!t]
\caption{Coefficient of variation (CV) of calibrated uncertainty $\sigma$ at t+6h. Higher CV indicates adaptive intervals that distinguish extreme events from routine conditions.}
\label{tab:sharpness}
\begin{tabular}{lcccc}
\toprule
\multirow{2}{*}{Model} & \multicolumn{2}{c}{CV (t2m)} & \multicolumn{2}{c}{CV (msl)} \\
\cmidrule(lr){2-3} \cmidrule(lr){4-5}
 & ICA & SVD & ICA & SVD \\
\midrule
Pangu-Weather & 1.81 & 0.49 & 0.09 & 0.49 \\
Aurora & 0.09 & 0.10 & 0.09 & 0.10 \\
FourCastNetV2 & 0.31 & 0.06 & 0.31 & 0.00 \\
AIFS & 0.27 & 0.05 & 0.27 & 0.05 \\
\bottomrule
\end{tabular}
\vspace{1mm}

\footnotesize{ICA consistently produces higher CV than SVD. SVD shows CV $< 0.1$ (nearly uniform intervals). $^\dagger$Pangu-ICA (68.4\% coverage) excluded from valid comparisons.}
\end{table}

\begin{table}[!t]
\caption{Spearman correlation ($\rho_s$) between errors and uncertainties for 850\,hPa temperature (t\_850) at each model's optimal rank $k$. Bold values indicate $\rho_s > 0.3$.}
\label{tab:correlation}
\resizebox{\columnwidth}{!}{%
\begin{tabular}{lccccccc}
\toprule
Model & $k^*$ & 6h & 12h & 24h & 48h & 72h & 120h \\
\midrule
Pangu-Weather & 1 & \textbf{0.56} & 0.00 & 0.00 & 0.00 & 0.00 & \textbf{0.34} \\
Aurora & 5--20 & 0.22 & 0.07 & 0.13 & 0.26 & 0.18 & 0.23 \\
FourCastNetV2 & 3--20 & 0.19 & 0.11 & 0.23 & 0.27 & 0.11 & 0.16 \\
AIFS & 5--10 & \textbf{0.33} & 0.25 & 0.16 & 0.08 & 0.11 & \textbf{0.32} \\
\bottomrule
\end{tabular}%
}
\vspace{1mm}
\footnotesize{$k^*$: optimal rank by $\rho_s$. Pangu zero correlations at 12--72h reflect 69-dim feature instability; high CV does not guarantee high $\rho_s$.}
\end{table}

\subsection{Variance Collapse Empirical Validation}

Table~\ref{tab:collapse} quantifies the correction-to-prior ratio $R_k = C_k/P$ for FourCastNetV2 at increasing truncation rank $k$, empirically validating Proposition~\ref{prop:collapse}. At $k=100$, the correction term consumes 92\% of the prior variance, leaving only 8\% residual---insufficient for meaningful uncertainty discrimination. The calibration algorithm compensates by scaling uncertainties by $\alpha = 10{,}000$, but this uniform scaling cannot recover discriminative power. Maintaining $k \leq 10$ keeps $C_k/P < 15\%$, preserving discrimination while achieving target coverage. The $R_k < 0.9$ threshold is empirically validated here (collapse occurs at $k=100$ where $R_k = 0.92$).

\begin{table}[!t]
\caption{Correction ratio $C_k/P$ for FourCastNetV2 at increasing rank $k$.}
\label{tab:collapse}
\begin{tabular}{lccccc}
\toprule
Rank $k$ & 5 & 10 & 20 & 50 & 100 \\
\midrule
$C_k/P$ & 6\% & 12\% & 24\% & 53\% & \textbf{92\%} \\
\bottomrule
\end{tabular}
\end{table}

\clearpage
\end{document}